\newcommand{\mat}[1]{\ensuremath{\mathbf{#1}}}
\newcommand{\N}{\mathbb{N}}
\newcommand{\Z}{\mathbb{Z}}
\newcommand{\R}{\mathbb{R}}
\newcommand{\depth}{\mathrm{depth}}
\newcommand{\concat}{\ |\!|\  }
\DeclareMathOperator*{\argmax}{arg\,max}
\newcommand{\trees}{\mathscr{T}}
\newcommand{\str}{\mathrm{str}}
\newcommand{\opensymbol}{ [\![}
\newcommand{\closesymbol}{]\!]}
\renewcommand{\vec}[1]{\ensuremath{\mathbf{#1}}}
\newcommand{\vecs}[1]{\ensuremath{\mathbf{\boldsymbol{#1}}}}
\newcommand{\ten}[1]{\mat{\ensuremath{\boldsymbol{\mathcal{#1}}}}}
\newcommand{\ttm}[1]{\times_{#1}}
\newcommand{\Rbb}{\mathbb{R}}
\definecolor{violet}{rgb}{0.58, 0.0, 0.83}
\definecolor{teal}{rgb}{0.0, 0.5, 0.5}
\newtheorem*{rep@theorem}{\rep@title}
\newcommand{\newreptheorem}[2]{%
\newenvironment{rep#1}[1]{%
 \def\rep@title{#2 \ref{##1}}%
 \begin{rep@theorem}}%
 {\end{rep@theorem}}}
\theoremstyle{plain}
\newtheorem{definition}{Definition}[section]
\newtheorem{lemma}{Lemma}
\newtheorem{theorem}{Theorem}
\newtheorem{corollary}{Corollary}
\newcommand{\Prob}{\mathbb{P}}
\title{Simulating Weighted Automata over Sequences and Trees with Transformers}
\author{}
\begin{document}
\twocolumn[

\aistatstitle{Simulating Weighted Automata over Sequences and Trees with Transformers}

\aistatsauthor{Michael Rizvi \And Maude Lizaire \And  Clara Lacroce \And Guillaume Rabusseau}

\aistatsaddress{Mila \& DIRO \\ Université de Montréal \And Mila \& DIRO \\ Université de Montréal \And McGill University \\ Mila, Montréal, Canada \And  Mila \& DIRO \\ Université de Montréal, \\CIFAR AI Chair}
]

\begin{abstract}
Transformers are ubiquitous models in the natural language processing (NLP) community and have shown impressive empirical successes in the past few years. However, little is understood about how they reason and the limits of their computational capabilities. These models do not process data sequentially, and yet outperform sequential neural models such as RNNs. Recent work has shown that these models can compactly simulate the sequential reasoning abilities of deterministic finite automata (DFAs). This leads to the following question: can transformers simulate the reasoning of more complex finite state machines? In this work, we show that transformers can simulate weighted finite automata (WFAs), a class of models which subsumes DFAs, as well as weighted tree automata (WTA), a generalization of weighted automata to tree structured inputs. We prove these claims formally and provide upper bounds on the sizes of the transformer models needed as a function of the number of states the target automata. Empirically, we perform synthetic experiments showing that transformers are able to learn these compact solutions via standard gradient-based training.
\end{abstract}

\section{INTRODUCTION}
Transformers are the backbone of modern NLP systems \citep{transformers}. These models have shown impressive gains in the past few years. Large pretrained language models can translate texts, write code and can solve math problems, tasks which all require some level of sequential reasoning capabilities~\citep{brown2020language, chen2021evaluating}. However, unlike recurrent neural networks~(RNNs)~\citep{elman1990,Schmidhuber}, transformers do not perform their computations sequentially. Instead they process all input tokens in parallel. This defies our intuition as these models do not possess the inductive bias that naturally arises from treating a sequence from beginning to end.

In order to understand how transformers implement sequential reasoning, recent work by~\cite{liu2022transformers} studied connections between deterministic finite automata~(DFAs) and transformers. DFAs are simple models that perform deterministic sequential reasoning on strings from a given alphabet, which makes them a perfect candidate for exploring sequential reasoning in attention-based models. 
To do so, the authors consider the perspective of simulation. Informally, a transformer is said to simulate a DFA if for an input sequence of length $T$, it can output the sequence of states visited throughout the DFAs computation. The authors also consider how the complexity needed to perform this simulation task varies as a function of $T$. They find that it is possible to simulate all DFA at length $T$ with a transformer of size $O(\log T)$. They also show that in the case where the automaton is solvable, it is possible to achieve such a result with $O(1)$ size. This sheds light on how transformers can \textit{compactly} encode sequential behavior without explicitly performing sequential computation. 

However this is not representative of the capacities of transformers. The type of reasoning they implement can indeed go much further than the simple reasoning done by DFAs. In this work, we propose to go further and consider (i) weighted finite automata~(WFAs), a family of automata that generalize DFAs by computing a real-valued function over a sequence instead of simply accepting or rejecting it, and (ii) weighted tree automata~(WTA), a generalization of weighted automata to tree structured inputs. We show that transformers can simulate both WFAs as well as WTAs, and that they can do so \textit{compactly}.

More precisely, we show that, using hard attention and bilinear layers, transformers can \textit{exactly} simulate all WFAs at length $T$ with $\mathcal{O}(\log T)$ layers. Moreover, we show that using a more standard transformer implementation, with soft attention and an MLP (multilayer perceptron), transformers can \textit{approximately} simulate all WFAs at length $T$ up to arbitrary precision with $\mathcal{O}(\log T)$ layers and MLP width constant in $T$. This first set of results shows that transformers can learn shortcuts to sequence models significantly more complex than deterministic finite automata. 
Our second set of results is about computation over trees. For WTAs, the notion of simulation we introduce assumes that the transformer is fed a string representation of a tree and outputs the states of the WTA for each subtree of the input. We show that transformers can simulate WTA to arbitrary precision at length $T$ with $\mathcal O(\log T)$ layers over balanced trees. Since the class of WTA subsumes classical~(non-weighted) tree automata, an important corollary we obtain is that transformers can also simulate tree automata. 
Our results thus extend the ones of \cite{liu2022transformers} for DFAs in two directions: from boolean to real weights and from sequences to trees. 

Empirically, we study to which extent transformers can be trained to simulate WFAs. We first show that compact solutions can be found in practice using gradient-based optimization. To do so, we train transformers on simulation tasks using synthetic data. We also investigate if the number of layers and embedding size of such solutions scale as theory suggests. 

\section{PRELIMINARIES}
\subsection{Notation}
    We denote with $\N$, $\Z$ and $\R$ the set of natural, integers and real numbers, respectively. We use bold letters for vectors~(\textit{e.g.} $\vec{v} \in \R^{d_1}$), bold uppercase letters for matrices (\textit{e.g.} $\mat{M} \in \R^{d_1 \times d_2}$) and bold calligraphic letters for tensors~(\textit{e.g.} $\ten{T} \in \R^{d_1 \times \hdots \times d_n}$). All vectors considered are column vectors unless otherwise specified. We denote with $\mat{I}$ the identity matrix and write $\mat{I}_m$ to denote the $m \times m$ identity matrix.
    We will also denote $\mat{0}$ as the matrix full of zeros and use $\mat{0}_{m\times n}$ to denote the $m \times n$ zero matrix or simply $\mat{0}_m$ when said matrix is square.
    The $i$-th row and the $j$-th column of a matrix $\mat{M}$ are denoted by $\mat{M}_{i,:}$ and $\mat{M}_{:,j}$. We denote the Frobenius norm of a matrix as $\| \mat{M}\|_F$
    Finally, we will use $\mathbf{e}_i$ to refer to the $i$th canonical basis vector.
    
Let $\Sigma$ be a fixed finite alphabet of symbols, $\Sigma^*$ the set of all finite strings~(words) with symbols in $\Sigma$ and $\Sigma^n$ the set of all finite strings of length $n$. We use $\varepsilon$ to denote the empty string. Given $p,s \in \Sigma^*$, we denote with $ps$ their concatenation.
    
\subsection{Weighted Finite Automata}
    
    Weighted finite automata are a generalization of finite state machines \citep{droste,Mohri09,salomaa}. This class of models subsumes deterministic and non-deterministic finite automata, as WFAs can calculate a function over strings in addition to accepting or rejecting a word. While general WFAs can have weights in arbitrary semi-rings, we focus our attention on WFAs with real weights, as they are more relevant to machine learning applications. 
    
    \begin{definition}
    A \emph{weighted finite automaton} (WFA) of $n$ states over $\Sigma$ is a tuple $\mathcal{A} = \langle \boldsymbol{\alpha} , \{\mat{A}^\sigma\}_{\sigma \in \Sigma},  \boldsymbol{\beta} \rangle$, where $\boldsymbol{\alpha},$ $\boldsymbol{\beta} \in \R^n$ are the initial and final weight vectors, respectively, and $\mat{A}^\sigma \in \R^{n \times n}$ is the matrix containing the transition weights associated with each symbol $\sigma \in \Sigma$. Every WFA $\mathcal{A}$ with real weights realizes a function $f_A : \Sigma^* \to \R$, \emph{i.e.} given a string  $x = x_1 \cdots x_t \in \Sigma^*$, it returns $f_\mathcal{A}(x) = \boldsymbol{\alpha} ^\top \mat{A}^{x_1} \cdots \mat{A}^{x_t} \boldsymbol{\beta} = \boldsymbol{\alpha} ^\top \mat{A}^x \boldsymbol{\beta}$. 
    \end{definition}

    To simplify the notation, we write the product of transition maps $\mat{A}^{x_1} \cdots \mat{A}^{x_t}$ as $\mat{A}^{x_1\cdots x_t}$ or even $\mat{A}^{x_{1:t}}$ for longer sequences. 
    
    Similarly to ordinary DFAs, we define the \textit{state} of a WFA on a word $x_1\cdots x_t$ to be the product $\boldsymbol{\alpha^\top}\mathbf{A}^{x_1}\hdots \mathbf{A}^{x_t}$.  
    In light of this, we define $\mathcal{A}(x)$ to be the function that returns the sequence of states for a given word $x \in \Sigma^T$. More formally, we have
    \begin{align*}
    \mathcal{A}(x) = (\boldsymbol{\alpha}^\top, \boldsymbol{\alpha}^\top\mathbf{A}^{x_1}, 
    \boldsymbol{\alpha}^\top\mathbf{A}^{x_1x_2},
    \hdots, \boldsymbol{\alpha}^\top\mathbf{A}^{x_{1:T}})^\top
    \end{align*}
    There exist many model families encompassed by WFAs, one of the most well-known subsets of these models are hidden Markov models~(HMMs). 

\subsection{Weighted Tree Automata}\label{subsec:wta.prelim}

Weighted tree automata~(WTA) extend the notion of WFAs to the tree domain. In all generality, WTAs can be defined with weights over an arbitrary semi-ring and have as domain the set of ranked trees over an arbitrary ranked alphabet~\citep{droste}. Here, we only consider WTAs with real weights~(for their relevance to machine learning applications) defined over binary trees~(for simplicity of exposition).  We first formally define the domain of WTAs we will consider. 
\begin{definition}
    Given a finite alphabet $\Sigma$, the set of binary trees with leafs labeled by symbols in $\Sigma$ is denoted by $\trees_{\Sigma}$~(or simply $\trees$ if the leaf alphabet is clear from context). Formally, $\trees_{\Sigma}$ is the smallest set such that $\Sigma \subset \trees_{\Sigma}$ and 
         $(t_1,t_2) \in  \trees_{\Sigma}$ for all $t_1,t_2  \in \trees_{\Sigma}$.
\end{definition}

A WTA~(with real weights) computes a function mapping trees in $\trees_\Sigma$ to real values. In the context of machine learning, they can thus be thought of as parameterized models for functions defined over trees~(e.g. probability distributions or scoring functions). The computation of a WTA is performed in a bottom up fashion: (i) for each leaf, the state of a WTA with $n$ states is an $n$-dimensional vector, (ii) the states for all subtrees are computed recursively from the ground up by applying a bilinear map to  the left and right child of each internal nodes (iii) similarly to WFAs, the output of a WTA is then a linear function of the state of the root. Formally, 
\begin{definition}
    A weighted tree automaton~(WTA) $\mathcal{A}$ with $n$ states on $\trees_{\Sigma}$ is a tuple $\langle \vecs{\alpha} \in \Rbb^n, \ten{T}\in \Rbb^{n\times n \times n}, \{ \vec{v}_\sigma \in \Rbb^n  \}_{\sigma \in \Sigma} \rangle $. A WTA $\mathcal{A}$ computes a function $f_\mathcal{A} : \trees_\Sigma \to \Rbb$ defined by $f_\mathcal{A}(t) = \langle \vecs{\alpha}, \mu(t) \rangle$ where the mapping $\mu : \trees_\Sigma \to \Rbb^n$ is recursively defined by
    \begin{itemize}
        \item $\mu(\sigma) = \vec{v}_\sigma$ for all $\sigma\in\Sigma$,
        \item $\mu( (t_1,t_2) ) = \ten{T} \ttm{2} \mu(t_1) \ttm{3} \mu(t_2)$ for all $t_1,t_2\in\trees_\Sigma$.
    \end{itemize}
\end{definition}
The states of an $n$ state WTA are thus the $n$-dimensional vectors $\mu(\tau)$ for each subtree $\tau$ of the input tree. The computation of states of a WTA on an exemple tree is illustrated in Figure~\ref{fig:wta}~(left). 

WTAs are naturally related to weighted~(and probabilistic) context free grammars in the following ways. First, the set of derivation trees of a context free grammar is a regular tree language, that is a language that can be recognized by a tree automaton~\citep{magidor1970probabilistic,tata}. Furthermore, a weighted context free grammar~(WCFG)) maps any sequence to a real value by summing the weights of all valid derivation trees of the input sequence, where the weight of a tree is the value computed by a given WTA~(which defines the WCFG)~\citep{droste}. 

\subsection{Transformers}
\label{transformers}
The transformer architecture used in our construction is very similar to the encoder in the original transformer architecture \citep{transformers}.
First, we define the self-attention mechanism as
\begin{align*}
    f(\mathbf{X}) = \text{softmax}(\mathbf{X}\mathbf{W}_Q\mathbf{W}_K^\top \mathbf{X}^\top)\mathbf{X}\mathbf{W}_V,
\end{align*}
where $\mathbf{W}_Q, \mathbf{W}_K, \mathbf{W}_V \in \mathbb{R}^{d \times k}$, $d$ is the embedding dimension and $k$ is some chosen dimension, usually with $k < d$. Note that the softmax is taken row-wise.

One can also define the self attention layer using \textit{hard attention} by setting the largest value row-wise to 1 and all others to 0. This can be thought of as each row "selecting" a specific token to attend to.

By taking $h$ copies of this structure, concatenating the outputs of each head and applying a linear layer, we obtain a multi-head attention block, which we denote $f_{\text{attn}}$. 

We can now define the full transformer architecture.
Given a sequence of length $T$ with embedding dimension $d$, an \textit{$L$-layer transformer} is a sequence to sequence network $f_{\text{tf}}:\R^{T \times d} \to \R^{T \times d}$ where each layer is composed of a multi-head attention block followed by a feedforward block in the following manner
\begin{align*}
    f_{\text{ft}} = f_{\text{mlp}}^{(L)}\circ f_{\text{attn}}^{(L)} \circ
    f_{\text{mlp}}^{(L-1)}\circ f_{\text{attn}}^{(L-1)} \circ
    \hdots\circ f_{\text{mlp}}^{(1)}\circ f_{\text{attn}}^{(1)}.
\end{align*}
The feedforward block is simply a multilayer perceptron (MLP). The parameters of this MLP are not necessarily the same from one layer to another.

\subsection{Bilinear Layers}
We now introduce a special feed-forward layer which will be used in the construction for exact simulation of WFAs. 
\begin{definition}
Given two vectors $\mathbf{x}_1\in \R^{d_1}$ and $\mathbf{x}_2 \in \R^{d_2}$, a bilinear layer is a map from $\R^{d_1} \times \R^{d_2}$ to $\R^{d_3}$ such that
\begin{align*}
    \text{BilinearLayer}(\mathbf{x}_1,\mathbf{x}_2) = 
    \ten{T}\times_1\mathbf{x}_1\times_2\mathbf{x}_2 + \mathbf{b}
\end{align*}
where $\ten{T} \in \R^{d_1 \times d_2 \times d_3}$ and $\mathbf{b} \in \R^{d_3}$ are learnable parameters.
\end{definition}
It is easy to see that any bilinear map can be computed by such a layer given that all bilinear maps can be represented as a tensors~(similarly to how linear maps are represented by matrices). Note that bilinear layers have been introduced and used for practical applications previously, especially in the context of multi-modal and multiview learning~\citep{gao2016compact,li2017factorized,lin2015bilinear}.

% Put figure here so it appears top of page
\begin{figure*}[th]
\begin{center}
\includegraphics[width=0.95\textwidth]{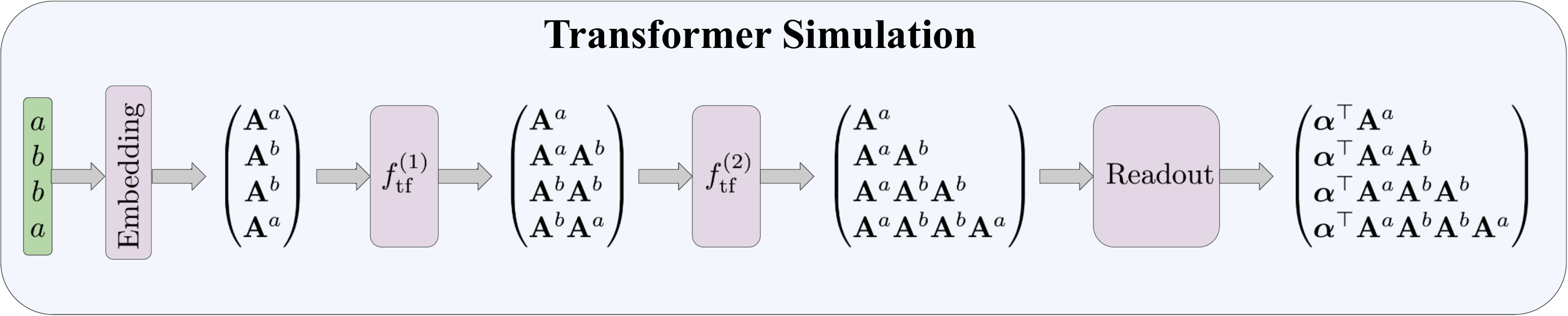}
\end{center}
\caption{Simulation of the WFA computation over the input $w=abba$ with a transformer.}\label{fig:wfa}
\end{figure*}

\section{SIMULATING WEIGHTED AUTOMATA OVER SEQUENCES}
In this section, we start by introducing the definition of simulation for WFAs, which is crucial to understanding the results in this section. We then state and briefly analyze our main theorems.

\subsection{Simulation Definition}
Intuitively, simulation can be thought of as reproducing the intermediary steps of computation for a given algorithm. For a WFA, these intermediary steps correspond to the state vectors throughout the computation over a given word.

\begin{definition}
Given a WFA $\mathcal{A}$ over some alphabet $\Sigma$, a function $f:\Sigma^T \to \R^{T \times n}$ \textit{exactly} simulates $\mathcal{A}$ at length $T$ if, for all $x \in \Sigma^T$ as input, we have $f(x) = \mathcal{A}(x)$, where $\mathcal{A}(x) = (\boldsymbol{\alpha}^\top, \boldsymbol{\alpha}^\top\mathbf{A}^{x_1}, \hdots, \boldsymbol{\alpha}^\top\mathbf{A}^{x_{1:T}})^\top$.
\end{definition}

Additionally, we define the notion of approximate simulation. Intuitively, given some error tolerance $\epsilon$, we can always find a function $f$ which can simulate a WFA up to precision $\epsilon$.

\begin{definition}
Given a WFA $\mathcal{A}$ over some alphabet $\Sigma$, a function $f:\Sigma^T \to \R^{T \times n}$ \textit{approximately} simulates $\mathcal{A}$ at length $T$ with precision $\epsilon > 0$ if for all $x \in \Sigma^T$, we have $\| f(x) - \mathcal{A}(x) \|_F < \epsilon$.
\end{definition}

Using a $T$ layer transformer, it is easy to simulate a WFA over a sequence of length $T$. We simply use the transformer as we would an unrolled RNN; performing each step of the computation at the corresponding layer. 

However, transformers are typically very shallow networks \citep{brown2022wide}, which defies intuition given how deep models tend to be more expressive than their shallow counterparts \citep{eldan2016power,cohen2016expressive}. This naturally leads to the following question: can transformers simulate WFAs using a number of layers that is less than linear (in the sequence length)? Following the work of \cite{liu2022transformers}, we define the notion of shortcuts.

\begin{definition}
    Let $\mathcal{A}$ be a WFA. If for every $T \geq 0$, there exists a transformer $f_T$ that simulates (exactly or approximately) $\mathcal{A}$ at length T with depth $L \leq o(T)$, then we say that there exists a shortcut solution to the problem of simulating $\mathcal{A}$.
\end{definition}
\begin{figure*}[th]
\begin{center}
\includegraphics[width=0.95\textwidth]{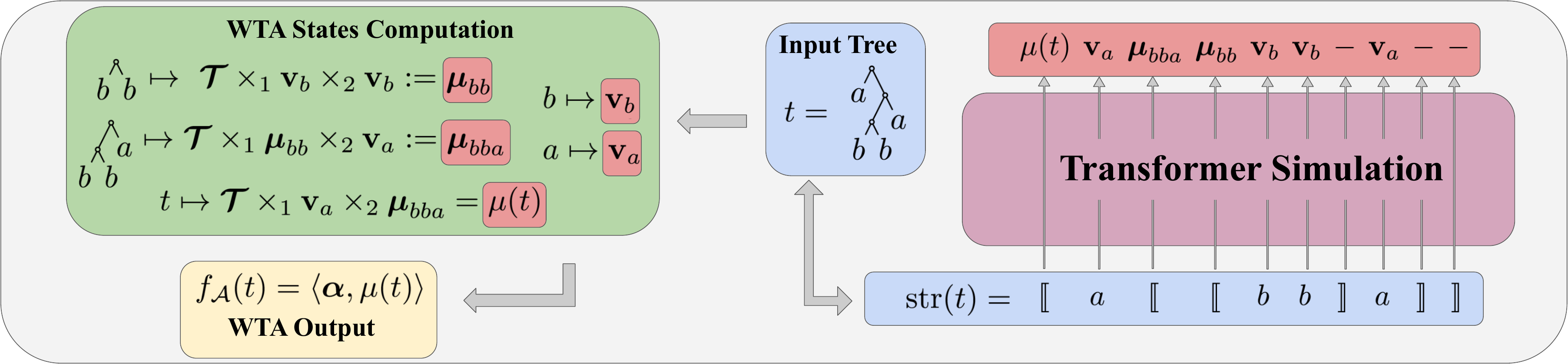}
\end{center}
\caption{Computation of a WTA on the input tree  $t=(a,((b,b),b))$ (left) and simulation of the WTA computation over $t$ with a transformer~(right).}\label{fig:wta}
\end{figure*}

\subsection{Main theorems}
In the following section, we state our main theorems and discuss their scope as well as their limitations. The proofs of these theorems can be found in Appendix \ref{appendix:B}. 

\begin{theorem}
Transformers using bilinear layers in place of an MLP and hard attention can \textit{exactly} simulate all WFAs with $n$ states at length $T$, with depth $\mathcal{O}(\log T)$, embedding dimension $\mathcal{O}(n^2)$, attention width $\mathcal{O}(n^2)$,  MLP width $\mathcal{O}(n^2)$ and $\mathcal O(1)$ attention heads.
\label{thmexact}
\end{theorem}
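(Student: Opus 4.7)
The plan is to implement a parallel prefix-product scan over the sequence of transition matrices $\mat{A}^{x_1},\ldots,\mat{A}^{x_T}$. Since matrix multiplication is associative, such a scan can be carried out in $\lceil \log_2 T \rceil$ parallel rounds in the style of a Hillis--Steele scan; since the product of two $n\times n$ matrices is a bilinear map, each round fits naturally into the proposed architecture, with hard attention selecting the correct partner at each position and a bilinear layer performing the actual multiplication. This matches the allowed resources exactly: depth $\mathcal{O}(\log T)$, width $\mathcal{O}(n^2)$ (since an $n \times n$ matrix has $n^2$ entries), and a single attention head per layer.

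Concretely, I would initialize the embedding at position $t$ to consist of $\mathrm{vec}(\mat{A}^{x_t}) \in \R^{n^2}$, a positional encoding of $t$, and a reserved slot containing $\mathrm{vec}(\mat{I}_n)$ for boundary use. The invariant I maintain is that after layer $\ell$, position $t$ stores $\mat{P}^{(\ell)}_t = \mat{A}^{x_{\max(1,t-2^\ell+1):t}}$, with $\mat{P}^{(0)}_t = \mat{A}^{x_t}$, via the update
$$\mat{P}^{(\ell)}_t = \mat{P}^{(\ell-1)}_{t-2^{\ell-1}} \cdot \mat{P}^{(\ell-1)}_t,$$
interpreted as $\mat{I}_n \cdot \mat{P}^{(\ell-1)}_t$ whenever $t - 2^{\ell-1} \le 0$. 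At layer $\ell$ this update is implemented by a single hard-attention head whose matrices $\mat{W}_Q^{(\ell)},\mat{W}_K^{(\ell)}$ are chosen so that the query at $t$ uniquely argmaxes on the key at $t - 2^{\ell-1}$ (or on the reserved identity slot if that offset underflows), delivering $\mat{P}^{(\ell-1)}_{t-2^{\ell-1}}$ through the value stream. The subsequent bilinear layer, specified by the fixed tensor $\ten{T}\in\R^{n^2\times n^2\times n^2}$ that encodes the map $(\mat{M},\mat{N}) \mapsto \mat{M}\mat{N}$ on vectorized matrices, completes the update. All widths are $\mathcal{O}(n^2)$ and a constant number of heads suffices throughout.

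After $\lceil\log_2 T\rceil$ layers, position $t$ holds $\mat{A}^{x_{1:t}}$, and a final linear layer contracting with $\boldsymbol{\alpha}$ on the left produces $\boldsymbol{\alpha}^\top \mat{A}^{x_{1:t}} \in \R^n$ at each position; the initial state $\boldsymbol{\alpha}^\top$ can be emitted either by prepending a sentinel token or by reading it off the reserved identity slot. Correctness of the invariant follows by a routine induction on $\ell$, using the bilinearity of matrix multiplication to verify that each bilinear layer realizes the product of the two selected matrices exactly. The main obstacle is the explicit construction of positional encodings together with the weight matrices $\mat{W}_Q^{(\ell)},\mat{W}_K^{(\ell)}$ that make hard attention select exactly the offset $2^{\ell-1}$ at each layer, including the boundary case where the offset underflows and the query must instead latch onto the identity slot. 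This can be handled by a binary positional code together with layer-dependent shift operators that align the binary representation of $t$ with that of $t - 2^{\ell-1}$, or equivalently by a sinusoidal scheme; the positional component contributes only $\mathcal{O}(\log T)$ to the embedding dimension and is absorbed into lower-order terms relative to the $\mathcal{O}(n^2)$ bounds claimed in the theorem.
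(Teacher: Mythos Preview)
Your proposal is correct and follows essentially the same route as the paper: a Hillis--Steele parallel prefix scan over the transition matrices, with hard attention selecting the offset partner at distance $2^{\ell-1}$ and a bilinear layer realizing the matrix product at each round. The paper's construction differs only in implementation details---it uses two attention heads (one shifts by $2^{\ell-1}$ via a rotation of two-dimensional sinusoidal positional encodings, the other attends to self so that both factors are available to the bilinear layer, since no residual stream is assumed) and handles the boundary by prepending $T$ identity-padded tokens rather than via a reserved slot in the embedding.
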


The proof of this theorem relies on hard attention as well as the use of bilinear layers. Since this does not correspond to the typical definition of transformer used in practice, we derive a second result. This consists in an approximate version of the previous theorem and relies on a more standard transformer implementation, using a softmax and a standard feedforward MLP.

\begin{theorem}
Transformers can \textit{approximately} simulate all WFAs with $n$ states at length $T$, up to arbitrary precision $\epsilon > 0$, with depth $\mathcal{O}(\log T)$, embedding dimension $\mathcal{O}(n^2)$, attention width $\mathcal{O}(n^2)$,  MLP width $\mathcal{O}(n^4)$  and $\mathcal O(1)$ attention heads.
\label{thmapprox}
\end{theorem}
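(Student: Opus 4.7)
The plan is to obtain Theorem \ref{thmapprox} by starting from the exact construction of Theorem \ref{thmexact} and replacing each of its two non-standard ingredients, hard attention and bilinear layers, by approximations realizable inside a standard transformer. The depth, embedding dimension, attention width and head count are inherited directly from the hard-attention/bilinear construction; only the MLP width is allowed to grow from $\mathcal O(n^2)$ to $\mathcal O(n^4)$, which is what the emulation of a bilinear map by a ReLU MLP will cost. A careful error accounting is then needed to keep the total deviation from $\mathcal A(x)$ below $\epsilon$.

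First I would replace hard attention by softmax attention. In the construction of Theorem \ref{thmexact}, each query attends to a single, well-identified key (its ``partner'' in the binary-tree-style prefix-product scheme), so there is a positive logit margin $\gamma$ between the top key and its runner-up on any bounded set of activations. Scaling the query and key matrices by a large temperature $\lambda$ turns $\mathrm{softmax}(\lambda \cdot s)$ into an arbitrarily sharp approximation of the one-hot argmax, with error decaying exponentially in $\lambda \gamma$. Choosing $\lambda$ polynomial in $T$, $n$ and $\log(1/\epsilon)$ makes the per-layer attention error arbitrarily small without changing the asymptotics in $n$ and $T$.

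Next I would approximate each bilinear layer by a standard ReLU MLP. The bilinear map $\ten{T} \times_1 \mathbf{x} \times_2 \mathbf{y}$ with $\mathbf{x},\mathbf{y}\in\mathbb R^{n^2}$ is a linear combination of the $n^4$ scalar products $x_i y_j$, so it suffices to realize each of these products approximately and then apply a final linear map. Using the polarization identity
\begin{equation*}
xy \;=\; \tfrac14\bigl((x+y)^2 - (x-y)^2\bigr),
\end{equation*}
the task reduces to approximating the scalar squaring map $t \mapsto t^2$ on a compact interval, which is classically achievable by a constant-width ReLU network to any desired precision (for instance via a piecewise-linear sawtooth interpolation). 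Placing $n^4$ copies of this scalar gadget in parallel, one for each pair $(i,j)$, and composing with the fixed linear readout specified by $\ten{T}$ yields an MLP of width $\mathcal O(n^4)$ that uniformly approximates the bilinear layer on any bounded input domain.

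Finally, I would propagate the errors across the $\mathcal O(\log T)$ layers. The state updates in the exact construction are iterated matrix products, which are Lipschitz on any bounded region, so per-layer errors grow by a bounded multiplicative factor $C$ depending only on the spectral quantities of $\mathcal A$ and on $n$. Setting the per-layer attention and MLP tolerances to $\epsilon / C^{L}$ with $L=\mathcal O(\log T)$ then yields total Frobenius error below $\epsilon$. The main obstacle I expect is this last bookkeeping step: one must check that the activations stay in a fixed bounded region at every depth (so that both the softmax sharpening and the MLP approximation of $t\mapsto t^2$ remain valid) and verify that the accumulated Lipschitz factors only inflate the constants hidden in the $\mathcal O(\cdot)$ notation, not the stated polynomial dependence on $n$ and $T$.
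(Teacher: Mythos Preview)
Your proposal is correct and follows the same overall strategy as the paper: start from the exact construction of Theorem~\ref{thmexact}, replace hard attention by a temperature-scaled softmax (the paper multiplies the query/key matrices by a ``saturating constant'' $\sqrt{C}$, which is exactly your $\lambda$), replace the bilinear layer by an MLP approximating the degree-$2$ polynomial map $(\mathrm{vec}(\mat A),\mathrm{vec}(\mat B))\mapsto\mathrm{vec}(\mat A\mat B)$, and track how per-layer errors compound across the $\mathcal O(\log T)$ layers via a recursive bound of the form $\epsilon^{(\ell)}\leq M\epsilon^{(\ell-1)}+(\epsilon^{(\ell-1)})^2+\epsilon_{\mathrm{mlp}}$.

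The one substantive difference is in how the bilinear map is realized by an MLP. You build it explicitly from the polarization identity and $n^4$ parallel scalar squaring gadgets; the paper instead invokes a black-box polynomial-approximation result (an abridged Theorem~3.1 of \cite{chong2020closer}) stating that any degree-$d$ polynomial $\R^{m_1}\to\R^{m_2}$ can be approximated to arbitrary precision by a two-layer network of width $\binom{m_1+d}{d}$. With $m_1=2n^2+2$ and $d=2$ this also yields width $\mathcal O(n^4)$. The payoff of the paper's route is that the MLP architecture is \emph{independent of $\epsilon$}: only the weights change as $\epsilon\to 0$, a point the paper explicitly highlights after the theorem statement. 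Your sawtooth-based squaring needs depth $\mathcal O(\log(1/\epsilon))$ inside each feedforward block, so while it still meets the stated asymptotics (the theorem does not constrain MLP depth), it does not recover that $\epsilon$-independence. On the other hand, your construction is more elementary and fully self-contained.
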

Notice that the size of the construction does not depend on the approximation error $\epsilon$. This is one of the advantages of our approximate construction: we can achieve arbitrary precision without compromising the size of the model.

The proofs of both theorems rely on the \textit{prefix sum algorithm} ~\citep{blelloch1990prefix}, an algorithm that can compute all $T$ prefixes of a sequence in $\mathcal{O}(\log T)$ time (for more details we refer the reader to the appendix). Formally, this means that given a sequence $x_1, x_2, \hdots, x_T$ as input, the algorithm returns all partial sums $(x_1), (x_1 + x_2), \hdots, (x_1 + \hdots + x_T)$.

For both proofs in this section, we consider sequences of transition maps, and use composition as our "sum" operation. Using the definition of the transformer given in Section~\ref{transformers}, we present a construction which implements this algorithm. Figure \ref{fig:wfa} illustrates the key elements of this construction. Here, we take $f_\text{tf}^{(\ell)}$ to be the $\ell$th layer of an $L$ layer transformer such that $f_\text{tf}^{(\ell)} = f_{\text{mlp}}^{(\ell)}\circ f_{\text{attn}}^{(\ell)}$.

\section{SIMULATING WEIGHTED TREE AUTOMATA}\label{sec:WTA}

We now turn our focus to analyzing the capacity of transformers to efficiently simulate weighted tree automata. 
\subsection{Simulation definition}
As mentioned in Section~\ref{subsec:wta.prelim}, the states of a WTA of size $n$ are the $n$-dimensional vectors $\mu(\tau)$ for all subtrees $\tau$ of the input tree $t$. Intuitively, we will say that a transformer can simulate a given WTA if it can compute all subtree states  $\mu(\tau)$ when fed as input a string representation of $t$. We now proceed to formalize this notion of simulation.

Given a tree $t\in\trees_\Sigma$, we denote by $\str(t)$ its string representation, omitting commas. More formally, $\str(t) \in \Sigma^*\cup\{\opensymbol,\closesymbol\}$ is recursively defined by
 \begin{itemize}
        \item $\str(\sigma) = \sigma$ for all $\sigma\in\Sigma$,
        \item $\str( (t_1,t_2) ) = \opensymbol\ \str(t_1)\ \str(t_2)\  \closesymbol$ for all $t_1,t_2\in\trees_\Sigma$.
\end{itemize}
It is worth mentioning that the mapping $t\mapsto \str(t)$ %: \trees_\Sigma \to (\Sigma\cup\{\opensymbol,\closesymbol\})^*$
is injective, thus any tree $t$ can be recovered from its string representation~(but not all strings in $(\Sigma\cup\{\opensymbol,\closesymbol\})^*$ are valid representations of trees).
 
One can then observe that each opening parenthesis in $\str(t)$ can naturally be mapped to a subtree of $t$. 
Formally, given a tree $t$, let $\mathcal{I}_t = \{ i = 1,\cdots, |\str(t)|  \mid  \str(t)_i \not = \closesymbol \}  $ be the set of positions of $\str(t)$ corresponding to opening parenthesis and leaf symbols. Then, the set of subtrees of $t$ can be mapped (one-to-one) to indices in $\mathcal{I}_t$:
\begin{definition}\label{def:subtree.str.rep}
Given a tree $t \in \trees_\Sigma$, we let $\tau : \mathcal{I}_t \to \trees_\Sigma$ be the mapping defined by 
$$\tau(i) = \str^{-1}(x_ix_{i+1}\cdots x_j)$$ where $j$ is the unique index such that the string $x_ix_{i+1}\cdots x_j$ is a valid string representation of a tree~(i.e. $s = x_ix_{i+1}\cdots x_j$ is the only substring starting in position $i$ such that there exists a tree $\tau\in\trees_\Sigma$ for which $\str(\tau)=s$). 
\end{definition}

Intuitively, we will say that a transformer can simulate a WTA if, given some
input $\str(t)$, the output of the transformer for each position $i\in\mathcal{I}_t$ is equal to the corresponding state $\mu(\tau(i))$ of the WTA after parsing the subtree $\tau_i$. We can now formally define the notion of WTA simulation.

\begin{definition}
    Given a WTA $\mathcal{A}=\langle \vecs{\alpha}, \ten{T}, \{ \vec{v}_\sigma \}_{\sigma \in \Sigma} \rangle$ with $n$ states on $\trees_{\Sigma}$, we say that a function $f: (\Sigma\cup \{\opensymbol,\closesymbol \})^T \to (\Rbb^n)^T$ simulates $\mathcal{A}$ at length $T$ if for all trees $t\in\trees_\Sigma$ such that $|\str(t)|\leq T$, $f(\str(t))_i = \mu(\tau_i)$ for all $i\in\mathcal{I}_t$ (where the subtrees $\tau_i$ are defined in Def.~\ref{def:subtree.str.rep}).
    
    Furthermore, we say that a family of functions $\mathcal{F}$ simulates WTAs with $n$ states at length T if for any WTA $\mathcal{A}$ with $n$ states there exists a function $f\in\mathcal{F}$ that simulates $\mathcal{A}$ at length $T$.
\end{definition}
The overall notion of WTA simulation is illustrated in Figure~\ref{fig:wta}. Note that this definition could be modified to encode a tree with the closing brackets instead of the opening ones. In this case, our results would still hold using attention layers with causal masking.

\subsection{Results}

We are now ready to state our main results for weighted tree automata:
\begin{theorem}
\label{thm:approx.wta}
Transformers can \textit{approximately} simulate all WTAs $\mathcal{A}$ with $n$ states at length $T$, up to arbitrary precision $\epsilon > 0$, with embedding dimension $\mathcal{O}(n)$, attention width $\mathcal{O}(n)$,  MLP width $\mathcal{O}(n^3)$  and $\mathcal O(1)$ attention heads\footnote{The big O notation does not hide any large constant here: the depth is exactly $1+\mathrm{depth(t)}$, the embedding dimension and the attention width are $n+4+p$ (where $p$ is the size of the positional embedding) and the MLP width is $\frac{1}{2}(2n+1)(2n+2)$. }. Simulation over arbitrary trees can be done with depth $\mathcal{O}(T)$ and simulation over balanced trees~(trees whose depth is of order $\log(T)$) with depth $\mathcal{O}(\log(T))$.
\end{theorem}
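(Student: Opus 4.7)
The plan is to construct a transformer that mirrors the bottom-up evaluation of the WTA on $\str(t)$, using exactly one layer per level of the tree. At the input stage, each position carries (i) a one-hot tag indicating whether it is an opening bracket, closing bracket, or a leaf symbol $\sigma\in\Sigma$, (ii) an $n$-dimensional state slot initialized to $\vec{v}_\sigma$ at leaves and to $\mathbf{0}$ at non-leaf positions, and (iii) a positional embedding of size $p$ rich enough to encode bracket matching. Since the matching-bracket function is a deterministic function of $\str(t)$, we can bake into the positional embedding, for every position $i$, the indices $c_1(i), c_2(i)$ of the two children of the subtree rooted at $i$ (when $i$ is an opening bracket); this is standard in such constructions and keeps $p$ polylogarithmic in $T$. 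Leaves and closing brackets are treated as "no-op" positions throughout.

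Given this setup, the transformer layer $\ell$ performs one step of the recursion $\mu((t_1,t_2)) = \ten{T}\ttm{2}\mu(t_1)\ttm{3}\mu(t_2)$ at every opening-bracket position whose two children already hold their correct states. Two attention heads suffice: head~$k\in\{1,2\}$ at an opening position $i$ emits a query encoding $c_k(i)$ and matches it against keys encoding each position's own index, so that a sufficiently sharp softmax (large key/query scale) approximates hard attention and retrieves the state $\mu(\tau(c_k(i)))$ of the $k$-th child to within arbitrary precision. The multihead output at position $i$ is thus close to the pair $(\vec u, \vec v)$ of children states, while at leaf and closing-bracket positions the residual leaves the stored state untouched. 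This explains the attention width $n+4+p = O(n)$ and the $\mathcal{O}(1)$ heads.

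The feed-forward block then approximates the bilinear map $(\vec u,\vec v)\mapsto \ten{T}\ttm{2}\vec u\ttm{3}\vec v$ at every opening-bracket position (and identity elsewhere, selectable via the bracket-type tag). Each of the $n$ output coordinates is a homogeneous quadratic form in the $2n$ inputs, hence a linear combination of the $\binom{2n+1}{2}=\frac{1}{2}(2n+1)(2n+2)$ monomials $u_iu_j$, $u_iv_j$, $v_iv_j$. Using the identity $xy=\tfrac{1}{4}[(x+y)^2-(x-y)^2]$ and a piecewise-linear ReLU approximation of $x\mapsto x^2$ on the bounded range where states will live, all of these monomials can be computed to arbitrary precision by a single two-layer ReLU MLP whose hidden width is proportional to this monomial count, giving the width stated in the footnote. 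The final linear combination with coefficients drawn from $\ten{T}$ produces the desired output in $\R^n$.

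The correctness and depth claims then follow by induction on $\ell$: after layer $\ell$, every position $i$ whose subtree $\tau(i)$ has depth $\leq \ell$ stores a vector close to $\mu(\tau(i))$, so $1+\mathrm{depth}(t)$ layers suffice, which is $\mathcal{O}(\log T)$ on balanced trees and $\mathcal{O}(T)$ in general. The main technical obstacle, and the step requiring the most care, is controlling how approximation errors from softmax attention and from the piecewise-linear square approximation compound across layers: since each bilinear application can amplify the error by a factor depending on $\|\ten{T}\|$ and on the magnitude of the intermediate states, both the attention temperature and the MLP knot spacing must be set as functions of $T$, $n$, $\|\ten{T}\|$, $\max_\sigma\|\vec v_\sigma\|$ and the target precision $\epsilon$, while leaving the architectural widths unchanged. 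A Lipschitz-style propagation argument, together with the observation that these tuning parameters affect only the magnitudes of the weights rather than the width of the network, delivers the uniform architectural bounds claimed in the theorem while driving $\|f(\str(t))_i-\mu(\tau(i))\|$ below $\epsilon$ for every $i\in\mathcal{I}_t$.
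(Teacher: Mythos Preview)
Your overall architecture matches the paper's: one transformer layer per tree level, attention heads fetching the two child states, an MLP applying the bilinear map $\ten T$, and an induction on subtree height. But two steps are genuinely problematic.

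The main gap is where you ``bake into the positional embedding, for every position $i$, the indices $c_1(i),c_2(i)$ of the two children''. Positional embeddings in the paper's (and the standard) transformer model are functions of the position $i$ alone; the child indices depend on the \emph{input} tree, since two different trees with $|\str(t)|=T$ have different bracket matchings at the same position. What you call ``standard'' is an input-dependent preprocessing step that is exactly the nontrivial part of the construction, and you have not shown how the transformer performs it. The paper solves this internally: two initial layers compute at each position the depth-from-root $d_i$ as an attention-based prefix sum of shifted bracket markers $m_j\in\{-1,0,1\}$; then the right-child head is built with score $A^{(R)}_{i,j}=-\beta\bigl(1-(d_j-d_i)\bigr)^2+\vec p_i^\top\mat R^\top\vec p_j+2\,\mathbb{I}[j\ge i+2]$, so that $\argmax_j A^{(R)}_{i,j}$ is the nearest $j>i+1$ with $d_j=d_i+1$, which is precisely the right child. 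The left child is always at position $i{+}1$ and is handled by a simple rotation head. If you want to keep your outline, you must supply a mechanism by which the transformer itself locates the children; the paper's depth-based scheme is the natural one.

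A second, smaller issue concerns the MLP. You approximate $x\mapsto x^2$ by a piecewise-linear ReLU network and then assert that only ``knot spacing'' (weight magnitudes), not width, needs to change with $\epsilon$. That is false for ReLU: a two-layer ReLU network is piecewise linear, and with a fixed number of hidden units its best uniform approximation of $x^2$ on any interval has error bounded away from zero. To reach arbitrary $\epsilon$ you must increase the number of pieces, hence the width. The paper sidesteps this by invoking a polynomial-approximation theorem for two-layer networks with a non-polynomial activation, giving width $\binom{2n+2}{2}=\tfrac12(2n+1)(2n+2)$ independent of $\epsilon$; that is where the footnote's constant comes from. With ReLU your width would necessarily depend on $\epsilon$ and on the tree depth, and the stated architectural bounds would no longer hold.
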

The construction used in the the proof~(which can be found in Appendix \ref{Appendix:C}) revolves around two main ideas: (i) leveraging the attention mechanism to have each node attend to the positions corresponding to its left and right subtrees and (ii) using the feedforward layers to approximate the bilinear mapping $(\mu(t_1),\mu(t_2))\mapsto \mu((t_1,t_2)) = \ten T \times_1 \mu(t_1)\times_2 \mu(t_2)$. In the intial embedding, each leaf position is initialized to the corresponding leaf state $\vec v_\sigma$. At each layer of computation, each position computes the output of the multilinear map applied to the embeddings of its respective left and right subtrees.  Thus, after one layer the state of all the subtrees of depth up to 2 have been computed~(leafs and subtrees of the form $(\sigma_1,\sigma_2))$. Similarly, after the $\ell$th layer, the transformer will have simulated all the states corresponding to subtrees of depth up to $\ell +1$. Thus after as many layers as the depth of the input tree, all states have been computed.

We will conclude with a few interesting observations. First, note that for all balanced tree, all states will have been computed after $\mathcal O(\log T) $ layers of computation. Thus, if we only consider well-balanced trees, i.e. trees whose depth is in $\mathcal O(\log T) $, then our construction constitutes a shortcut. 
However, in the case of the most extremely unbalanced tree~(which is  a comb, i.e. a tree of the form $(\sigma_1,(\sigma_2,(\sigma_3,(\cdots))))$), it will take $\mathcal O(T)$ layers to compute all states. At the same time, in this most extreme case, a tree is nothing else than a string/sequence, and the computation of the WTA on this tree can be as well be carried out by a WFA, which could be simulated by a transformer with $\mathcal O(\log T)$ layers from our previous results for WFA. This raises the question of whether there exist a construction interpolating between the WFA and WTA constructions proposed in this paper which could simulate WTA with $\mathcal O(\log T)$ layers for \emph{arbitrary} trees. Still, this result shows that  transformers can indeed learn shortcuts to WTAs when restricting the set of inputs to balanced trees.

Second, since WTAs subsume classical finite state tree automata, an important corollary of our result is that transformers can also simulate~(non-weighted) tree automata. Classical tree automata can be defined as WTA with weights in the Boolean semi-ring. Intuitively, Boolean computations can, in some sense, be simulated with real weights by interpreting any non-zero value as true and any zero value as false, thus WTA can simulate classical tree automata. Since we just showed that WTA can in turn be simulated by transformers, we have the following corollary~(whose proof can be found in appendix).
\begin{corollary}
\label{cor:approx.treeautomata}
Transformers can \textit{approximately} simulate all tree automata $\mathcal{A}$ with $n$ states at length $T$, up to arbitrary precision $\epsilon > 0$, with the same hyper-parameters and depth as for WTA given in Theorem~\ref{thm:approx.wta}.
\end{corollary}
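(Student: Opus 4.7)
The plan is to reduce tree automaton simulation to WTA simulation via an explicit encoding of Boolean states as canonical basis vectors. Given a classical tree automaton $\mathcal{A}$ with state set $Q = \{1,\ldots,n\}$, transition relation $\delta : Q \times Q \to 2^Q$, and leaf assignment $q_\sigma \in Q$ for each $\sigma \in \Sigma$, I would construct a real-weighted WTA $\mathcal{A}'$ with $n$ states by setting $\vec{v}_\sigma = \vec{e}_{q_\sigma}$ for every leaf symbol and by defining the transition tensor $\ten{T} \in \R^{n \times n \times n}$ to be the $0/1$ tensor with $\ten{T}_{q,q_1,q_2} = 1$ if and only if $q \in \delta(q_1,q_2)$.

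A straightforward induction on tree depth then shows that for every $t \in \trees_\Sigma$, the support of $\mu(t) \in \R^n$ (as computed by $\mathcal{A}'$) coincides exactly with the set of states reachable by $\mathcal{A}$ on $t$. In the deterministic case, $\mu(t) = \vec{e}_{q_t}$ for the unique reachable state $q_t$; in the non-deterministic case the magnitudes of the nonzero entries count distinct runs, but their support still encodes reachability. This makes precise the ``non-zero means true, zero means false'' interpretation stated before the corollary, and shows that $\mathcal{A}'$ simulates $\mathcal{A}$ through the canonical encoding $q \mapsto \vec{e}_q$.

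Finally, I would apply Theorem~\ref{thm:approx.wta} to $\mathcal{A}'$ with the desired precision $\epsilon$. Since $\mathcal{A}'$ has exactly $n$ states, the embedding dimension, attention width, MLP width, number of heads, and depth bounds are all identical to those stated in Theorem~\ref{thm:approx.wta}, and the transformer output at each position $i \in \mathcal{I}_t$ lies within $\epsilon$ (in Frobenius norm) of $\mu(\tau_i)$, which by the previous step encodes the tree automaton's state on $\tau_i$. No substantive obstacle arises in this argument; the only point requiring care is fixing the canonical basis encoding and verifying the induction that relates WTA states to reachable tree automaton states, after which the corollary is an immediate instance of Theorem~\ref{thm:approx.wta}.
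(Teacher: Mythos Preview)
Your proposal is correct and follows essentially the same approach as the paper: encode the Boolean tree automaton as a real-weighted WTA via a $0/1$ transition tensor and canonical basis vectors for the leaves, verify by induction that the support of the resulting $\mu(t)$ encodes reachability, and then invoke Theorem~\ref{thm:approx.wta}. The paper's own argument is the one-line sketch in the main text (non-zero entries interpreted as true, zero as false), so your write-up is in fact more explicit than what the paper provides.
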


\begin{table*}[h]
    \centering
    \begin{tabular}{ccccccccccll} 
        \toprule
         Pautomac nb&  4&  12&  14&  20&  30&  31&  33&  38& 39 & 45\\ 
     \midrule
         num states&  12&  12&  15&  11&  9&  12&  13&  14&  6& 14\\ 
         alphabet size&  4&  13&  12&  18&  10&  5&  15&  10&  14& 19\\ 
         type&  PFA&  PFA&  HMM&  HMM&  PFA&  PFA&  HMM&  HMM&  PFA& HMM\\ 
         symbol sparsity & 0.4375 &  0.3526 &  0.4944 &  0.3939&  0.6555 & 0.3833&   0.5949& 0.7857 &  0.4167 &  0.8008\\ 
 nb layers for $\epsilon$& 8 & 6 & 2 & 6 & - & 8 & - & 2 & - & -\\ 
     \bottomrule
    \end{tabular}
    \caption{Minimum number of layers to reach error < $\epsilon=10^{-3}$}
    \label{tab:pautomac-table}
\end{table*}

\section{EXPERIMENTS}
\label{experiments}
In this section, we investigate if logarithmic shortcut solutions can be found using gradient descent based learning.
We train transformer models on sequence to sequence simulation tasks, where, for a given input sequence, the transformer must produce as output the corresponding sequence of states. We then study how varying certain parameters impacts model performance and compare this with results predicted by theory. We find that transformer models are indeed capable of approximately simulating WFAs and that the number of layers of the model has an important effect on performance, as predicted by theory.

\subsection{Can logarithmic solutions be found?}
We first investigate if, under ideal supervision, such solutions can even be found in the first place. We evaluate models on target WFAs taken from the Pautomac dataset \citep{pautomac}. We use the target automata to generate sequences of states with length $T=64$.

In Table \ref{tab:pautomac-table}, we report the minimum number of layers necessary for a transformer to reach a test (mean squared) error below $\epsilon=10^{-3}$. We report our findings for $L \in \{ 2, 4, 6, 8, 10\}$. If no model achieving $\epsilon$ was found, we indicate this using a dash. This table also includes all practical information about the target automaton such as its number of states and the size of its alphabet.

We see that for more than half of the considered automata, we are indeed able to find a solution which satisfies our error criterion. However, the minimum values attained are not always consistent with the logarithmic threshold for shortcuts we propose in theory~(since $T=64$, we would expect $L=6$ to be the threshold). Furthermore, for certain automata, the maximum number of layers considered does not seem to be sufficient to achieve the target error. This behavior is not consistent with the measures of complexity of the problem given in the table~(\textit{i.e.}, number of states, alphabet size, symbol sparsity). 
In future work, it would be interesting to see if a more thorough hyperparameter search could lead to finding shortcut solutions for these automata as well.

\subsection{Do solutions scale as theory suggests?}
Next, we investigate how such solutions scale as key parameters in our construction increase. We study how the number of layers as well as the size of the embedding dimension influence the performance of transformer models trained to simulate synthetic WFAs.

For the experiments concerning the number of layers, the data is generated using a WFA with 2 states which counts the number of 0s in a binary string with $\Sigma=\{ 0,1\}$. Figure \ref{fig:avg_mse_layers} shows how the mean squared error (MSE) varies as we increase the number of layers. We consider $T \in \{16, 32, 64 \}$, and each curve represents a sequence length for which we run the experiment. The dotted vertical lines represent the theoretical value for shortcuts to be found, \textit{i.e.} $\log_2(T)$.

For all sequence lengths, the error curves display a pronounced elbow. We see that, at first, increasing the number of layers has a notable effect on decreasing the MSE. However, after a certain threshold, the MSE seems to stabilize and adding more layers has negligible effect. It is also interesting to note how close this stabilization point is to the number of layers necessary for shortcut solutions in practice.

For the experiments on the embedding dimension, we generate data using a WFA which counts $k$ distinct symbols in sequences over some alphabet $\Sigma$. One can define such a WFA using $k+1$ states, where the first $k$ components of the state correspond to the current counts for each of the $k$ symbols, and the last component is constant and equal to one. As an example, consider $\Sigma = \{ a,b,c\}$, we could define a 2-counting automata which counts $a$ and $b$. For the word $w = aaabb$ such an automaton would return the state
$\begin{pmatrix}
    3,2,1\\
\end{pmatrix}$,
where the two first dimensions count $a$ and $b$ respectively. More details about the considered automata can be found in the appendix.

For the purpose of this experiment, we consider $\Sigma = \{0,1,2,...,9 \}$ and $k \in \{2,4,6,8 \}$, where we count the $k$ first characters in the alphabet (in numerical order). For simplicity, we fix both the embedding dimension and the hidden size of the model to the same value.
 \begin{figure}[t]
\begin{center}
    \includegraphics[scale=1]{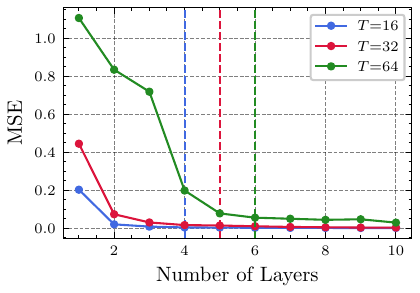}	
\end{center}
\caption{Average MSE vs. number of layers: For all considered sequence lengths, adding layers has an notable effect on the MSE at first, however past a certain point, the improvement is negligible. This stabilization is consistent with our theoretical results (shown as dotted lines).}
\label{fig:avg_mse_layers}
\end{figure}

\begin{figure}[t]
\begin{center}
    \includegraphics[scale=1]{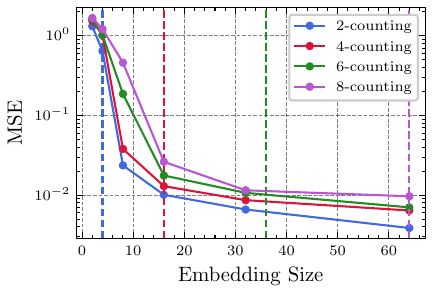}	
\end{center}
\caption{Average MSE (log scale) vs. embedding size: Increasing the embedding size also has a notable effect on the MSE. However the stabilization of the curves does not agree with as closely with our theoretical results (shown as dotted lines).}
\label{fig:avg_mse_embed}
\end{figure}

The results are presented in Figure~\ref{fig:avg_mse_embed}, where we also notice that the curves show a pronounced elbow shape. Interestingly, we note that the stabilisation of the error does not seem to follow the value $n^2$ predicted by the theory as closely as for the number of layers. This may be due to the training procedure considered. For instance, training on a bigger dataset, for more epochs or using more varied hyperparameters, the results may match more closely the trend predicted by the theory.

\section{RELATED WORKS}
\paragraph{Formal languages and Neural Networks}
There has been extensive study of the relationship between formal languages and neural networks. Many studies investigate the empirical performances of sequential models on formal language recognition tasks. \cite{deletang2022neural, bhattamishra2020ability} show where transformers and RNN variants lie on the Chomsky hierarchy by training these models on simple language recognition tasks. \cite{ebrahimi2020can} study the empirical performance of transformers on Dyck languages, which describe balanced strings of brackets. \cite{merrill2020formal} give an alternative hierarchy using space complexity and rational recurrence. There are also many theoretical results concerning connections between formal languages and neural models. \cite{chiang2022overcoming, daniely2020learning} show constructions of feedforward neural networks and transformers for PARITY (a language of binary strings such that the number of 1s is even) and FIRST (a language of binary strings starting with a 1). \cite{yao2021self}, on the other hand, give constructions of transformers for Dyck languages.

\paragraph{Neural networks and models of computation}
Work has also been done investigating theoretical connections between classical computer science models and neural networks. The most widely-known result is certainly \cite{chung2021turing}, showing that RNNs are Turing-complete. However, this work assumes unbounded computation time and infinite precision and thus such a result is not very applicable in more practical settings. Several work focus on extracting DFAs \citep{WeissDFA18,Giles,OmlinGiles,merrill2022,muskardin} and WFAs \citep{WeissWFA19,Takamasa,zhangaaai} from recurrent neural networks. A similar, more general approach can be found in a line of work focusing on extraction from black-box models on sequential data \citep{Ayache2018,eyraud2020,AAK-RNN} and on knowledge distillation from RNNs and transformers \citep{taysir}. Interestingly, it is possible to formally show that WFA are equivalent to 2-RNNs, a family of RNNs which uses bilinear layers \citep{li2022connecting}. Moreover, connections have been shown between transformers and Boolean circuits. \cite{merrill2022saturated, hao2022formal} and \cite{chiang2023tighter} show that transformers can implement Boolean circuits and show which circuit complexity classes transformers can recognize. On another line of thought, \cite{weiss_transf} show how transformers can realize declarative programs.

Closest to our work is \cite{liu2022transformers}, which first introduced the notion of transformers simulating DFAs. This paper initially prompted us to investigate what other families of automata could be simulated by transformers. Moreover, many of the technical ideas used in Theorem \ref{thmexact} and Theorem \ref{thmapprox} are inspired by the proof they present for the logarithmic case. 

Lastly, \cite{zhao2023transformers} is also very relevant to our work. Their work shows that transformers can simulate probabilistic context free grammars (PCFGs). More precisely, they show that transformers can implement the inside-outside algorithm~\citep{baker1979trainable}, which is a dynamic programming algorithm used to compute the probability of a given sequence under a PCFG. I.e., to compute the sum of the values returned by a given WTA~(which defines the PCFG) on all possible derivation trees of the input sequence: a task orthogonal~(and more difficult) than the one of simulating a WTA we consider here. Their construction~(understandably) requires more parameters than ours~($\mathcal O(T)$ layers with $\mathcal O(n)$ attention heads each and embedding size of $\mathcal O(nT)$).

\section{CONCLUSION}
In this paper, we theoretically demonstrate that transformers can simulate both WFAs and WTAs. For WFAs, simulation can be achieved using a number of layers logarithmic in the sequence length, whereas for WTAs, the number of layers must be equivalent to the depth of the tree given as input. 
Our results extend the ones of \cite{liu2022transformers} showing that transformers can learn shortcuts to models significantly more complex than deterministic finite automata.
We verified on simple synthetic experiments that  such shortcut solutions to WFAs can indeed be found using gradient based training methods. We hope that our results may shed some light on the success of transformers for sequential reasoning tasks, and give practical considerations in terms of depth and width of such models for given tasks. 

There are many theoretical and empirical directions in which our work could be extended. A first question is whether such shortcut solutions can be found \textit{in the wild}. Do transformer models trained on downstream tasks implement exactly or approximately the algorithmic reasoning capabilities of WFAs or WTAs? It may be interesting to analyze to what extent transformers natively implement the algorithmic reasoning used in our constructions.
Concerning theoretical results, we only provide upper bounds to the simulation capacities of the considered models. One could extend the results presented in this paper by deriving lower bounds for either WFA or WTA. We posit that there should exist languages for which these bounds are tight.
Another interesting question would be to analyze how such solutions scale with sample complexity and optimization schemes. Our results are about the existence of shortcut solutions, but the question of learnability for such solutions remains open. Empirically or theoretically, it would be interesting to show how the quantity of data, optimization procedure, or various aspects of the target structure can affect the quality of found shortcuts. In the same line of thought, an analysis of the training dynamics may be interesting. 

\section*{Acknowledgements}
This research is supported by the Canadian Institute
for Advanced Research (CIFAR AI chair program) and by the Natural Sciences and Engineering Research Council of Canada (NSERC).

%\section*{References}
\bibliography{bibliography.bib}
\bibliographystyle{unsrtnat}

%%%%%%%%%%%%%%%%%%%%%%%%%%%%%%%%%%%%%%%%%%%%%%%%%%%%%%%%%%%%

\onecolumn
\newpage
\section*{Checklist}

 \begin{enumerate}

 \item For all models and algorithms presented, check if you include:
 \begin{enumerate}
   \item A clear description of the mathematical setting, assumptions, algorithm, and/or model. \textbf{Yes}
   \item An analysis of the properties and complexity (time, space, sample size) of any algorithm. \textbf{Yes}
   \item (Optional) Anonymized source code, with specification of all dependencies, including external libraries. \textbf{Not Applicable}
 \end{enumerate}

 \item For any theoretical claim, check if you include:
 \begin{enumerate}
   \item Statements of the full set of assumptions of all theoretical results. \textbf{Yes}
   \item Complete proofs of all theoretical results. \textbf{Yes}
   \item Clear explanations of any assumptions. \textbf{Yes}     
 \end{enumerate}

 \item For all figures and tables that present empirical results, check if you include:
 \begin{enumerate}
   \item The code, data, and instructions needed to reproduce the main experimental results (either in the supplemental material or as a URL). \textbf{No}, but we intend on making the code repository public once the AISTATS decision will be made.
   \item All the training details (e.g., data splits, hyperparameters, how they were chosen). \textbf{Yes}
    \item A clear definition of the specific measure or statistics and error bars (e.g., with respect to the random seed after running experiments multiple times). \textbf{Yes}
     \item A description of the computing infrastructure used. (e.g., type of GPUs, internal cluster, or cloud provider). \textbf{Yes}
 \end{enumerate}

 \item If you are using existing assets (e.g., code, data, models) or curating/releasing new assets, check if you include:
 \begin{enumerate}
   \item Citations of the creator if your work uses existing assets. \textbf{No}
   \item The license information of the assets, if applicable. \textbf{Not Applicable}
   \item New assets either in the supplemental material or as a URL, if applicable. \textbf{Not Applicable}
   \item Information about consent from data providers/curators. \textbf{Not Applicable}
   \item Discussion of sensible content if applicable, e.g., personally identifiable information or offensive content. \textbf{Not Applicable}
 \end{enumerate}

 \item If you used crowdsourcing or conducted research with human subjects, check if you include:
 \begin{enumerate}
   \item The full text of instructions given to participants and screenshots. \textbf{Not Applicable}
   \item Descriptions of potential participant risks, with links to Institutional Review Board (IRB) approvals if applicable. \textbf{Not Applicable}
   \item The estimated hourly wage paid to participants and the total amount spent on participant compensation. \textbf{Not Applicable}
 \end{enumerate}

 \end{enumerate}
 \newpage

\aistatstitle{Simulating Weighted Automata over Sequences and Trees with Transformers \\ 
\vspace{0.5cm}

$-$ Supplementary Material $-$}
\appendix
\section{Background and Notation}\label{appendix:A}
\subsection{The Recursive Parallel Scan Algorithm}
In this section, we give an in-depth explanatation of the recursive parallel scan algorithm, which is crucial to the construction in the two first theorems in this paper.
The recursive parallel scan algorithm or prefix sum algorithm is an algorithm which calculates the running sum of a sequence \cite{blelloch1990prefix} Given a sequence of numbers $\{x_0, x_1, x_2, \hdots x_n\}$, the method outputs the running sum 
\begin{align*}
    y_0 &= x_0\\
    y_1 &= x_0 \oplus x_1\\
    \vdots\\
    y_n &= x_0 \oplus x_1 \oplus \hdots \oplus x_n.\\
\end{align*}
Where $\oplus$ represents any associative binary operator such as addition, string concatenation or matrix multiplication, for example. In order to compute this running sum, the algorithm uses a divide-and-conquer scheme. The pseudocode for the algorithm is given below.
\begin{algorithm}
\caption{Prefix Sum (Scan) Algorithm}
\KwData{Input sequence $\{x_0, x_1, x_2, \hdots x_n\}$}
\For{$i \leftarrow 0$ \KwTo $\text{floor}(\log_2 n$)}{
    \For{$j \leftarrow 0$ \KwTo $n - 1$}{
        \eIf{$j < 2^i$}{
            $x_j^{i + 1} \gets x_j^i$
        }
        {
            $x_j^{i + 1} \gets x_j^i + \gets x_{j - 2^i}^i$
        }
        
    }
}
\end{algorithm}

Here, we take $x_j^i$ to be the $j$th element in the sequence at timestep $i$.

The key idea of this algorithm is to recursively compute smaller partial sums at each step and combine them in the step after. The first for loop iterates through the powers of two which determine how far apart each precomputed sum is to the next (indexed by $i$). The second for loop recursively combines all partial sums that are powers of $2^i$ apart. By doing this a total of $\log n$ times, we are able to compute the running sum. Figure \ref{fig:prefix_sum} gives an example of this procedure for a sequence of length 8.

Considering there are $\mathcal{O}(n)$ sums to compute at every step, this leads to a total runtime of $\mathcal{O}(n\log n)$. In terms of space complexity, such an algorithm can be executed in place, by storing the results of each intermediate computation in the input array, thus leading to a space complexity of $\mathcal{O}(1)$.
\begin{figure}[H]\label{fig:prefix_sum}
\begin{center}
    \includegraphics[scale=0.5]{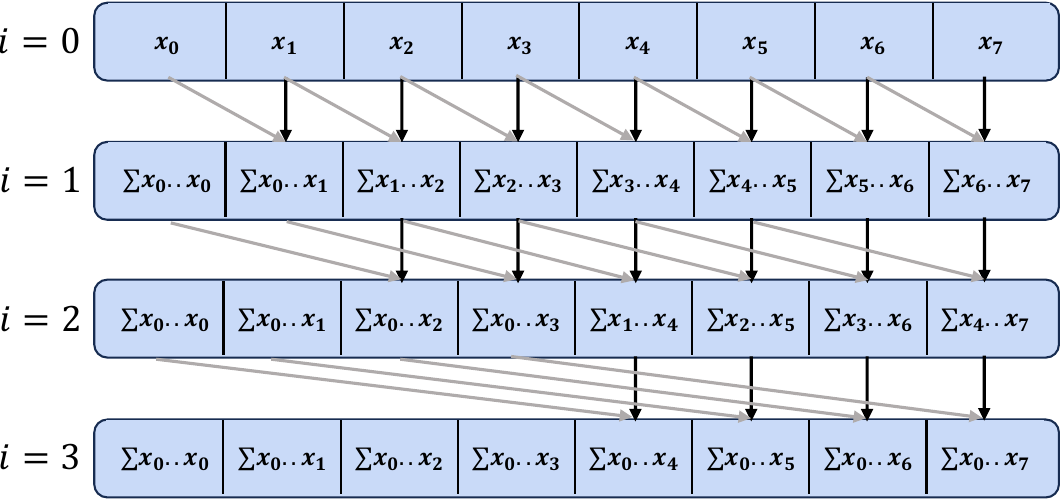}
\end{center}
\caption{Illustration of the prefix sum algorithm}
\end{figure}

\subsection{Weighted Finite Automata}
In this section, we give a more thorough treatment of WFAs and detail the families/instances of automata considered in the experiments section. We start by recalling the definition of a WFA
% Define here again
    \begin{definition}
    A \emph{weighted finite automaton} (WFA) of $n$ states over $\Sigma$ is a tuple $\mathcal{A} = \langle \boldsymbol{\alpha} , \{\mat{A}^\sigma\}_{\sigma \in \Sigma},  \boldsymbol{\beta} \rangle$, where $\boldsymbol{\alpha},$ $\boldsymbol{\beta} \in \R^n$ are the initial and final weight vectors, respectively, and $\mat{A}^\sigma \in \R^{n \times n}$ is the matrix containing the transition weights associated with each symbol $\sigma \in \Sigma$. Every WFA $\mathcal{A}$ with real weights realizes a function $f_A : \Sigma^* \to \R$, \emph{i.e.} given a string  $x = x_1 \cdots x_t \in \Sigma^*$, it returns $f_\mathcal{A}(x) = \boldsymbol{\alpha} ^\top \mat{A}^{x_1} \cdots \mat{A}^{x_t} \boldsymbol{\beta} = \boldsymbol{\alpha} ^\top \mat{A}^x \boldsymbol{\beta}$. 
    \end{definition}

\paragraph{Hidden Markov models}
Hidden Markov models (HMMs) are statistical models that compute the probability of seeing a sequence of observable variables subject to some "hidden" variable which follows a Markov process. It is possible to show that HMMs are a special case of WFAs. Recall the definition of a HMM
\begin{definition}
    Given a set of states $S = \{1, \hdots, n\}$, and a set of observations $\Sigma = \{1, \hdots, p \}$, a HMM is given by
    \begin{itemize}
        \item Transition probabilities $\mat{T} \in \R^{n \times n}$, where 
        $\mat{T}_{ij} = \mathbb{P}(h_{t+1}=j \mid h_t = i)$;
        \item Observation probabilities $\mat{O} \in \R^{p \times n}$, where $\mat{O}_{ij} = \mathbb{P}(o_t = i \mid h_t = j)$;
        \item An initial distribution $\boldsymbol{\pi} \in \R^n$, where $\boldsymbol{\pi}_i = \mathbb{P}(h_1 = i)$.
    \end{itemize}
\end{definition}
A HMM defines a probability distribution over all possible sequences of observations. The probability of a given sequence $x_1x_2x_3\hdots x_k$ is given by
\begin{align*}
    \mathbb{P}(x_1x_2x_3\hdots x_k) = \sum_{i_1}\boldsymbol{\pi}_i\mat{O}_{x_1,i_1}
    \sum_{i_2}\mat{T}_{i_1,i_2}\mat{O}_{x_2,i_2}\hdots
    \sum_{i_k}\mat{T}_{i_{k-1},ik}\mat{O}_{x_k,i_k}.
\end{align*}
Here, notice the similarities between this computation and that of a WFA. This lets us rewrite the HMM computation in terms of the definition of a WFA:
\begin{align*}
    \boldsymbol{\alpha} &= \boldsymbol{\pi}\\
    \mat{A}^x &= \text{diag}(\mat{O}_{x,1},\hdots,\mat{O}_{x,n})\mat{T},\; \forall x \in \Sigma\\
    \boldsymbol{\beta} &= \boldsymbol{1} \in \R^{n}.
\end{align*}
Where $\boldsymbol{1}$ is taken to be the vector full of ones. Thus, we have that
\begin{align*}
    \Prob(x_1x_2x_3\hdots x_k) = \boldsymbol{\alpha}^\top \mat{A}^{x_1}\mat{A}^{x_2}\mat{A}^{x_3}\hdots \mat{A}^{x_k}\boldsymbol{\beta}.
\end{align*}
\paragraph{Probabilistic Finite Automata}
Probabilistic finite automata (PFA) \cite{vidal2005probabilistic} are another subcategory of WFAs which compute probabilities. Here, we do not assume the matrices are row-stochastic as is the case for HMMs.
\begin{definition}
    A PFA is a tuple $\mathcal{A}=\langle Q_\mathcal{A}, \Sigma,\delta_\mathcal{A}, I_\mathcal{A}, F_\mathcal{A}, P_\mathcal{A} \rangle$, where:
    \begin{itemize}
        \item $Q_\mathcal{A}$ is a finite set of states;
        \item $\Sigma$ is the alphabet;
        \item $\delta_\mathcal{A} \subseteq Q \times \Sigma \times Q$;
        \item $I_\mathcal{A}: Q_\mathcal{A} \to \R^+$;
        \item $P_\mathcal{A}: \delta_\mathcal{A} \to \R^+$;
        \item $F_\mathcal{A}: Q_\mathcal{A} \to \R^+$.
    \end{itemize}
    $I_\mathcal{A}, P_\mathcal{A}$ and $F_\mathcal{A}$ are functions such that:
    \begin{align*}
        \sum_{q\in Q} I_\mathcal{A}(q) = 1
    \end{align*}
    and
    \begin{align*}
        \forall q \in Q_\mathcal{A},\; F_\mathcal{A}(q) + \sum_{\sigma \in \Sigma, q'\in Q_\mathcal{A}} P_\mathcal{A}(q,\sigma,q') = 1.
    \end{align*}
\end{definition}
Assuming we have $Q_\mathcal{A} = \{0, \hdots, |Q|-1 \}$, we can relate PFAs to WFAs in the following way:
\begin{align*}
    \boldsymbol{\alpha}_i &= I_\mathcal{A}(i),\; \forall i \in Q_\mathcal{A}\\
    \mat{A}^\sigma_{ij} &= P_\mathcal{A}(i,\sigma,j)\,\; \forall i,j \in Q_\mathcal{A}\times Q_\mathcal{A}\\
    \boldsymbol{\beta}_i &= F_\mathcal{A}(i),\; \forall i \in Q_\mathcal{A}.
\end{align*}

\paragraph{Counting WFA}
The WFA which counts the number of 0s considered in the Section \ref{experiments} is defined with $n=2$ and $\Sigma = \{0,1 \}$. The parameters of this automaton are
\begin{align*}
    \boldsymbol{\alpha} = 
    \begin{pmatrix}
        0 & 1
    \end{pmatrix}^\top,\;
    \mat{A}^0 = 
    \begin{pmatrix}
        1 & 0 \\
        1 & 1
    \end{pmatrix},\;
    \mat{A}^1 = 
    \begin{pmatrix}
        1 & 0 \\
        0 & 1
    \end{pmatrix},\;
    \boldsymbol{\beta}=
    \begin{pmatrix}
        1 & 0
    \end{pmatrix}^\top.
\end{align*}
Observe that the matrix $\mat{A}^0$ has the following property
\begin{align*}
    \left(\mat{A}^0\right)^n =
    \begin{pmatrix}
        1 & 0 \\
        n & 1
    \end{pmatrix},\;
\end{align*}
The matrix $\mat{A}^1$ on the other hand is the identity and leaves the count alone. Thus for some word $w \in \Sigma^*$, the state at the end of the computation would be
\begin{align*}
    \boldsymbol{\alpha^\top}\mat{A}^w = 
    \begin{pmatrix}
        |w|_0 & 1
    \end{pmatrix}.
\end{align*}

\paragraph{$k$-Counting WFA}
The $k$-counting WFA is a generalization of the automata presented in the previous paragraph. For the purpose of this work, we assume that we always count the $k$ first characters of $\Sigma$ and that $\Sigma = \{ 0, 1, \hdots N-1\}$, where $N = |\Sigma|$. Such an automaton needs $n=k+1$ states to execute such a computation. The parameters of this automaton are:
\begin{align*}
    \boldsymbol{\alpha} &=
    \begin{pmatrix}
        1 & 0 & \hdots & 0
    \end{pmatrix}
    \in \R^{k+1}\\
    \mat{A}^i &= \mat{I}_{k+1} + \vec{e}_{k+1}\vec{e}_i^\top,\;\forall i < k\\ 
        \mat{A}^i &= \mat{I}_{k+1},\;\forall i \geq k,
\end{align*}
with $i \in \Sigma$. The value of $\boldsymbol{\beta}$ is not important here as we are solely interested in the state. One could choose a terminal vector with 1s at specific positions to count the sum of certain symbols or even use 1s and -1s to verify if two symbols appear the same number of times.

\section{Simulating Weighted Finite Automata}\label{appendix:B}
\subsection{Proof of Theorem \ref{thmexact}}
First, we recall our first theorem. 
\begin{reptheorem}{thmexact}
Transformers using bilinear layers and hard attention can \textit{exactly} simulate all WFAs $\mathcal{A}$  at length $T$, with depth $\log T$, embedding dimension $2n^2 + 2$, attention width $2n^2 + 2$ and  MLP width $2n^2$, where $n$ is the number of states of $\mathcal{A}$.
\end{reptheorem}

Before diving into the details of our construction, we provide a high-level intuition of the proof. Similarly to \citep{liu2022transformers}, the key idea of this proof is using the recursive parallel scan algorithm to compute the composition of all transition maps efficiently. To implement this algorithm using a transformer, we store two copies of each symbol's transition map in the embeddings and use the attention mechanism to "shift" one of them by a power of two. 

Then we can use the MLP to calculate the matrix product between both transition maps to obtain the next set of values in the trajectory. By iterating this process for each of the $\log T$ layers, we are able to obtain every ordered combination of transition maps as it would be the case with the recursive parallel scan algorithm. 

\begin{proof}
We prove our result by construction. After recalling the main assumptions, we define each element in the construction. 

\paragraph{Important assumptions} We list our main assumptions.
\begin{itemize}
    \item For simplicity, we will only consider cases where the sequence length $T$ is a power of 2. Using padding, we can extend the construction to arbitrary $T$.
    \item We pad the input sequence with an extra $T$ tokens whose embedding are vectorized identities. This extra space will be used as a buffer to store the "shifted" version of the transition maps. The positions are indexed as $-T + 1, \hdots, 0, 1, \hdots, T$. This could equally be achieved using a more complicated MLP.
    \item Similarly to \citep{liu2022transformers}, our construction does not use any residual connections.
    \item This construction uses hard (or saturated) attention in the self attention block as well as a unique bilinear layer as the MLP block.
    \item We assume access to positional encodings at each layer. This could easily be implemented using either a third attention head (only two are used here) or using residual connections.
\end{itemize}
Please note that these assumptions are only for ease of exposition. It would be possible, but more complicated, to give a proof without them.
\paragraph{Embeddings} For the embeddings, we will use an embedding dimension of $d = 2n^2 + 2$, where $n$ is the number of states in the WFA. For a given symbol $\sigma_t, t \in [T]$, we define the embedding vector as:  
\begin{align*}
    \mathbf{x}_t &= 
    \begin{pmatrix}
        \text{vec}(\mathbf{A}^{\sigma_t}) & \text{vec}(\mathbf{A}^{\sigma_t}) & P_1(t) & P_2(t)
    \end{pmatrix}
    =
    \begin{pmatrix}
        \mathbf{x}_L & \mathbf{x}_R & \mathbf{x}_{P_1} & \mathbf{x}_{P_2}
    \end{pmatrix},
\end{align*}
with positional embeddings $P_1(t) = \cos \frac{\pi t}{T}$ and $P_2(t) = \sin \frac{\pi t}{T}$. For $t\leq 0$ we have $\mathbf{x}_t = 
    \begin{pmatrix}
        \text{vec}(\mathbf{I})&\text{vec}(\mathbf{I}) & P_1(t) & P_2(t)
    \end{pmatrix}$.
We refer to the dimensions associated to the first vectorized transition map as the left dimensions (indexed $\mathbf{x}_L$) and similarly the dimensions associated to the second vectorized transition map as the right dimensions (indexed $\mathbf{x}_R$). We also use indices $\mathbf{x}_{P_1}$ and $\mathbf{x}_{P_2}$ to refer to the positions associated with each positional encoding.

Note that the positional embeddings are defined for all values of $t \in \{ -T + 1, \hdots, 0, 1, \hdots, T \}$, meaning that they are also defined in the extra $T$ identity-padded dimensions. This is crucial in order to implement the shifting mechanism explained in the following step.

\paragraph{Attention mechanism} The attention mechanism used in this construction 
leverages the fact that transformers process tokens in parallel to implement the recursive parallel scan algorithm. We now detail its parameters and give some intuition as to their use in the construction.

Every self-attention block in this construction has a total of $h=2$ heads. We index the heads using superscripts $(L)$ and $(R)$. This simplifies the notation as the left and right heads process the left and right parts of the embedding respectively. The construction uses $L = \log_2(T)$ layers, where a layer is taken to be the composition$f_\text{mlp}\circ f_\text{attn}$.

For all $l$ layers with $1 \leq l \leq L$ let
\begin{align*}
    \mathbf{W}_Q^{(L)} &= \mathbf{W}_Q^{(R)} = \mathbf{W}_K^{(R)} = 
    \begin{pmatrix}
        \mat{0}_{n^2 \times 2} & \mat{0}_{n^2 \times 2} & \mat{I}_2
    \end{pmatrix}^\top\\
    \mathbf{W}_K^{(L)} &=
    \begin{pmatrix}
        \mat{0}_{n^2 \times 2} & \mat{0}_{n^2 \times 2} & \boldsymbol{\rho}_\theta
    \end{pmatrix}^\top
    \\
    \intertext{where $\boldsymbol{\rho}_\theta$ is the rotation matrix given by}
    \boldsymbol{\boldsymbol{\rho}}_\theta &= \begin{pmatrix}
        \cos \theta & \sin \theta \\
        -\sin \theta & \cos \theta
    \end{pmatrix},
    \intertext{and $\theta$ is defined as}
    \theta &= -\frac{\pi 2^{l-1}}{T}.
\end{align*}
The three first matrices directly select the positional embeddings from the input, and the last matrix selects the positional embeddings and rotates them according to the value of $l$. In essence, this rotation is what creates the power of two shifting necessary for the prefix sum algorithm.

Finally, we set
\begin{align*}
    \mat{W}_V^{(L)} &=
    \begin{pmatrix}
        \mat{I}_{n^2} & \mat{0}_{n^2} & \mat{0}_{{n^2} \times 2}\\
        \mat{0}_{n^2} & \mat{0}_{n^2} & \mat{0}_{{n^2} \times 2}\\
        \mat{0}_{n^2} & \mat{0}_{n^2} & \mat{0}_{{n^2} \times 2}\\
    \end{pmatrix}\\
    \mat{W}_V^{(R)} &=
    \begin{pmatrix}
        \mat{0}_{n^2} & \mat{0}_{n^2} & \mat{0}_{{n^2} \times 2}\\
        \mat{0}_{n^2} & \mat{I}_{n^2} & \mat{0}_{{n^2} \times 2}\\
        \mat{0}_{n^2} & \mat{0}_{n^2} & \mat{0}_{{n^2} \times 2}\\
    \end{pmatrix}\\
\end{align*}
These can be thought of as selector matrices. They select the submatrices corresponding to the left and right embedding sequence blocks.

\paragraph{Feedforward network} The feedforward network in this construction is the same for all $L$ layers. It uses a bilinear layer to compute the vectorized matrix product between transition maps. Note that the bilinear layer is applied batch-wise. This means that the bilinear layer is applied independently to each row of the input matrix.

We define two linear transformations
\begin{align*}
    \mat{W}_{\text{sel}}^{(L)} &= 
    \begin{pmatrix}
        \mat{I}_{n^2} & \mat{0}_{n^2} & \mat{0}_{{n^2} \times 2}
    \end{pmatrix}^\top\\
    \mat{W}_{\text{sel}}^{(R)} &= 
    \begin{pmatrix}
        \mat{0}_{n^2} & \mat{I}_{n^2} & \mat{0}_{{n^2} \times 2}
    \end{pmatrix}^\top.
\end{align*}
These matrices select the left and right embeddings. Next, using Lemma \ref{matmul_lemma}, we define the weight tensor $\ten{T}$. This tensor computes the matrix product between the left and right embeddings of the transition maps previously selected by the linear transformations

Finally, we need one last linear layer to copy the result of the compositions into both the left and the right embeddings. 
To do so, we define 
\begin{align*}
    \mat{W}_\text{out} = \begin{pmatrix}
        \mat{I}_{n^2} & \mat{I}_{n^2} & \mat{0}_{{n^2} \times 2}
    \end{pmatrix}^\top.
\end{align*}

Using this construction, we are able to recover, at the final layer the transition maps $\text{vec}(\mat{A}^\sigma_1), \text{vec}(\mat{A}^{\sigma_1\sigma_2}), ..., \text{vec}(\mat{A}^{\sigma_1...\sigma_T})$ in the right embedding dimension of the output. In order to transform these transition maps into states, we need only apply a transformation $\mat{W}_\text{readout}$ that maps each transition map to its corresponding state such that $\text{vec}(\mat{A}^{\sigma_1...\sigma_t}) \mapsto \boldsymbol{\alpha}^\top\mat{A}^{\sigma_1...\sigma_t}$.

\end{proof}

\begin{lemma}
    Let $\mat{A} \in \R^{n \times n}, \mat{B} \in \R^{n \times n}$ be two square matrices and let $\text{vec}(\cdot)$ denote their vectorization. Then there exists a tensor $\ten{T} \in \R^{n \times n \times n} $ that computes the vectorized matrix product such that
    \begin{align*}
        \ten{T}\times_1\text{vec}(\mat{A})\times_2\text{vec}(\mat{B}) = \text{vec}(\mat{A}\mat{B})
    \end{align*}
    \label{matmul_lemma}
\end{lemma}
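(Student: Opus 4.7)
The plan is to invoke the general fact that every bilinear map between finite-dimensional real vector spaces admits a representation as a 3-tensor, and then verify that matrix multiplication (composed with vectorization on each input and on the output) fits this framework. The map $(\mat A, \mat B)\mapsto \mat{AB}$ is bilinear in its two arguments, since $(\lambda \mat A + \mat A')\mat B = \lambda \mat{AB} + \mat{A'B}$ and symmetrically in $\mat B$; composing with the linear vectorization isomorphism $\mathrm{vec}:\R^{n\times n}\to\R^{n^2}$ on each argument yields a bilinear map $\R^{n^2}\times\R^{n^2}\to\R^{n^2}$, which by the universal property of the tensor product is encoded by a unique element of $\R^{n^2}\otimes\R^{n^2}\otimes\R^{n^2}$, i.e.\ a 3-tensor $\ten T$ of the required shape. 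I would read the stated shape $n\times n\times n$ as a mild typo for $n^2\times n^2\times n^2$, matching the dimensions of $\mathrm{vec}(\mat A)$, $\mathrm{vec}(\mat B)$, and $\mathrm{vec}(\mat{AB})$.

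To make the construction concrete, I would exhibit the entries of $\ten T$ explicitly. Fix the column-major convention so that $\mathrm{vec}(\mat M)_{(q-1)n+p} = \mat M_{pq}$. Starting from the identity $(\mat{AB})_{ij} = \sum_k \mat A_{ik}\mat B_{kj}$ and translating each matrix entry into the corresponding vec coordinate, define
$$\ten T_{a,b,c} = \mathbf{1}\bigl[\exists\, i,j,k\in\{1,\ldots,n\}:\ a=(k-1)n+i,\ b=(j-1)n+k,\ c=(j-1)n+i\bigr].$$
One then checks by direct substitution that $(\ten T\times_1\mathrm{vec}(\mat A)\times_2\mathrm{vec}(\mat B))_c$ reduces to $\sum_k \mat A_{ik}\mat B_{kj}$ for the unique pair $(i,j)$ determined by $c$, which is exactly $\mathrm{vec}(\mat{AB})_c$.

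The main obstacle is purely notational: one needs to verify that the triple $(i,j,k)$ appearing in the indicator is uniquely recoverable from $(a,b,c)$ whenever the indicator fires (so that the double sum over $(a,b)$ collapses to a single sum over $k$), which is immediate from the injectivity of $\mathrm{vec}$ and the fact that $c$ determines $(i,j)$ while the shared middle index $k$ of $a$ and $b$ is then forced. There is no deeper mathematical content here: the existence of $\ten T$ is essentially a restatement of the bilinearity of matrix multiplication, and the lemma's only role in the overall argument is to certify that the bilinear layer in the MLP block can implement a vectorized matrix product, so that the composition step of the prefix-sum construction can be carried out in a single feedforward block per layer.
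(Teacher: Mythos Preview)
Your proposal is correct and takes essentially the same approach as the paper: both define the tensor $\ten T$ entrywise via an indicator function encoding the index constraints of matrix multiplication, with your version being somewhat more careful (you fix a vectorization convention, flag the $n\times n\times n$ versus $n^2\times n^2\times n^2$ dimension typo, and verify the sum collapse explicitly). The abstract bilinearity/universal-property framing you add at the start is not in the paper's proof, but the concrete construction is the same.
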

\begin{proof}
Let $\mat{C} = \mat{A}\mat{B}$. Using the component wise definition of matrix product, we have
\begin{align*}
    \mat{C}_{ij} &= \sum_{k=1}^{n} \mat{A}_{ik}\mat{B}_{kj}\\
    \intertext{Summing over all possible values in $\mat{A}$ and $\mat{B}$, we can write this as}
    &=\sum_{i',j',i'',j''} \mathbb{1}\{ i'=i, j'=i'', j''=j\}\mat{A}_{i'j'}\mat{B}_{i''j''}\\
    \intertext{Which lets us define the tensor $\ten{T} \in \R^{n^2 \times n^2 \times n^2}$ componentwise as}
    \ten{T}(i',j',j'') &= \mathbb{1}\{ i'=i, j'=i'', j''=j\}.
\end{align*}
This operation can be thought of as taking all multiplicative interactions between the two transition matrices and summing together those that are related through the matrix product.
\end{proof}

\subsection{Proof of Theorem \ref{thmapprox}}
First, let us recall the theorem.
\begin{reptheorem}{thmapprox}
Transformers can \textit{approximately} simulate all WFAs $\mathcal{A}$ at length $T$, up to arbitrary precision $\epsilon > 0$, with depth $\log T$, embedding dimension $2n^2 + 2$, attention width $2n^2 + 2$ and  MLP width $2n^4 + 3n^2 + 1$, where $n$ is the number of states of $\mathcal{A}$.
\end{reptheorem}
The proof of this construction is very similar to that of Theorem \ref{thmexact}, as we use the same key idea of the shifting mechanism. The main difference comes from the fact that we can no longer compute the attention filter exactly and that the MLP is no longer able to compute the composition of transition maps exactly. The most important aspect of this proof is understanding how the error propagates through the construction and choosing sufficient error tolerances. Given a certain $\epsilon$, we want to make sure there exists a construction whose total error is no more than said $\epsilon$. For the MLP, we leverage the result from \citep{chong2020closer} on approximating polynomial functions using neural networks. To state the theorem, we first give some related definitions.

Let $\mathcal{C}(\R)$ denote the set of all real valued functions, $\mathcal{P}_{\leq d}(\R)$ denote the set of all real polynomials of degree at most $d$ and $\mathcal{P}_{\leq d}(\R^{m_1}, \R^{m_2})$ denote the set of all multivariate polynomials from $\R^{m_1}$ to $\R^{m_2}$.
   \begin{theorem}\label{thm_mlp}
      (\textit{abridged version of Theorem 3.1 of \citep{chong2020closer}}) Let $d \geq 2$ be an integer, let $f \in \mathcal{P}_{\leq d}(\R^{m_1}, \R^{m_2})$ and let $\rho_\Theta^\sigma$ be a two-layer MLP with activation function $\sigma$ and parameters $\Theta = (\mat{W}_1, \mat{W}_2)$. If $\sigma \in \mathcal{C}(\R)\setminus \mathcal{P}_{\leq d-1}$, then for every $\epsilon > 0$, there exists some $\Theta \in \{ (\mat{W}_1, \mat{W}_2) \mid \mat{W}_1 \in \R^{m_1 \times N},  \mat{W}_2 \in \R^{N \times m_2}\}$ with $N = {m_1+d \choose d}$ such that $\| f - \rho_\Theta^\sigma\|_\infty < \epsilon$.
   \end{theorem}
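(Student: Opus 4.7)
The plan is to follow the classical ridge-function density argument of Leshno--Lin--Pinkus--Schocken and Pinkus, specialised to polynomial targets, and then sharpen it to the tight width $N = \binom{m_1+d}{d}$, which is exactly the dimension of $\mathcal{P}_{\leq d}(\R^{m_1}, \R)$. Throughout fix an arbitrary compact $K \subset \R^{m_1}$ and read $\|\cdot\|_\infty$ as the sup norm on $K$. First reduce to the scalar-output case: the hidden representation of width $N$ is shared across outputs and $\mat{W}_2 \in \R^{N \times m_2}$ supplies an independent linear readout per output coordinate, so it suffices to show that for any $p \in \mathcal{P}_{\leq d}(\R^{m_1}, \R)$ and $\epsilon > 0$ there exist parameters $(a_i, b_i)_{i=1}^N$ and $c_i \in \R$ with $\sup_{x \in K} \bigl| p(x) - \sum_{i=1}^N c_i\, \sigma(a_i \cdot x + b_i) \bigr| < \epsilon$.

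The core of the proof is a density lemma: under the hypothesis $\sigma \in \mathcal{C}(\R) \setminus \mathcal{P}_{\leq d-1}$, the closure of the ridge span $\mathrm{span}\{\sigma(a \cdot x + b) : a \in \R^{m_1},\, b \in \R\}$ in the sup-on-$K$ norm contains $\mathcal{P}_{\leq d}(\R^{m_1}, \R)$. I would prove this in two sub-steps. \textbf{(a)} By convolving $\sigma$ with a compactly supported smooth mollifier one obtains $\sigma_\eta \in C^\infty(\R)$ converging uniformly on compacta to $\sigma$; since convolving a polynomial with a smooth mollifier yields a polynomial of the same degree, $\sigma_\eta \notin \mathcal{P}_{\leq d-1}$ for small $\eta$, so we may assume $\sigma \in C^\infty(\R)$. \textbf{(b)} For smooth $\sigma \notin \mathcal{P}_{\leq d-1}$, the derivative $\sigma^{(j)}$ fails to vanish identically for every $0 \leq j \leq d$ (otherwise $\sigma$ would be a polynomial of degree at most $j-1 \leq d-1$), so there is $b_j^\ast$ with $\sigma^{(j)}(b_j^\ast) \neq 0$. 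The identity
\begin{align*}
\frac{d^j}{dt^j}\bigg|_{t=0} \sigma(t\, a \cdot x + b_j^\ast) \;=\; \sigma^{(j)}(b_j^\ast)\cdot (a \cdot x)^j,
\end{align*}
combined with a $j$-th order finite-difference scheme in $t$, realises $(a \cdot x)^j$ as a uniform-on-$K$ limit of finite linear combinations of ridge functions. Polarising over enough directions $a$ recovers every monomial $x^\alpha$ with $|\alpha| = j$, and summing over $0 \leq j \leq d$ covers $\mathcal{P}_{\leq d}$.

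To tighten the width to exactly $N$, fix a monomial basis $\{p_1, \ldots, p_N\}$ of $\mathcal{P}_{\leq d}(\R^{m_1}, \R)$ and let $\pi$ denote the $L^2(K)$-orthogonal projection onto the span of this basis. The continuous map sending $(a_i, b_i)_{i=1}^N$ to the $N \times N$ matrix whose $i$-th row records the coordinates of $\pi(\sigma(a_i \cdot x + b_i))$ in $\{p_j\}$ cannot have identically vanishing determinant: if it did, the closure of the ridge span would miss some direction of $\mathcal{P}_{\leq d}$, contradicting the density lemma. Hence there is a non-empty open set of tuples on which this matrix is invertible; picking one such tuple and inverting yields explicit coefficients $c_i$ expressing each $p_j$, and by linearity any target $p \in \mathcal{P}_{\leq d}$, as a linear combination of the chosen $N$ ridge functions up to $\epsilon$ uniformly on $K$. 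Applying this coordinatewise across the $m_2$ output components supplies the columns of $\mat{W}_2$ while using width exactly $N$.

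The main obstacle is the quantitative version of step \textbf{(b)}: ensuring that the finite-difference approximation
\begin{align*}
(a \cdot x)^j \;\approx\; \frac{j!}{\sigma^{(j)}(b_j^\ast)\, h^j}\, \sum_{r=0}^{j} (-1)^{j-r} \binom{j}{r} \sigma(r h\, a \cdot x + b_j^\ast)
\end{align*}
converges \emph{uniformly} in $x \in K$ as $h \to 0$. For smooth $\sigma$ this follows from a Taylor remainder estimate on the map $t \mapsto \sigma(t\, a \cdot x + b_j^\ast)$ over the compact set $\{t\, a \cdot x + b_j^\ast : x \in K,\, |t| \leq 1\}$; the passage back to merely continuous $\sigma$ requires coupling the mollification parameter $\eta \to 0$ with $h \to 0$ carefully so that the two error sources are controlled together. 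Once this quantitative density is in hand, the width-tightening step is essentially a linear-algebraic corollary, and the reduction to scalar outputs is immediate.
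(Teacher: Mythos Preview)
The paper does not prove this theorem: it is quoted verbatim (in abridged form) from \citep{chong2020closer} and invoked as a black box in the proofs of Theorems~\ref{thmapprox} and~\ref{thm:approx.wta}, so there is no in-paper argument to compare against. Your density lemma (steps (a)--(b)) follows the standard Leshno--Pinkus template and is essentially sound, and the reduction to scalar outputs is correct.

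Your width-tightening step, however, has a genuine gap. You select $N$ ridge functions whose $L^2(K)$-projections onto $\mathcal{P}_{\leq d}$ are linearly independent and then invert to obtain coefficients $c_i$ with $\pi\bigl(\sum_i c_i\,\sigma(a_i\cdot x+b_i)\bigr)=p$. But this only guarantees that the residual $\sum_i c_i\,\sigma(a_i\cdot x+b_i)-p$ is $L^2(K)$-\emph{orthogonal} to $\mathcal{P}_{\leq d}$; nothing in the argument forces its sup norm to be small, and once the tuple $(a_i,b_i)$ is fixed there is no free parameter left to push the residual below $\epsilon$. The determinant being nonzero on an open set does not help: for any fixed tuple the residual is a fixed function, generically of order one on $K$. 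In short, the density lemma and the projection-rank argument live in different topologies and you have not bridged them. The repair used in the cited reference lets the ridge parameters depend on $\epsilon$ through a scaling limit: one replaces $a_i$ by $\lambda a_i$ and sends $\lambda\to 0$ so that the degree-$\leq d$ Taylor polynomial of each $\sigma(\lambda a_i\cdot x+b_0)$ dominates its remainder uniformly on $K$, while a genericity argument on the directions $a_i$ ensures those Taylor polynomials still span $\mathcal{P}_{\leq d}$. That is a substantively different construction from the fixed-tuple projection argument you outlined.
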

Note here that $m,n$ and $ N$ do not depend on $\epsilon$. This means that the size of the construction stays constant for all $\epsilon > 0$.
\begin{proof}
   We prove our result by construction. Given the similarities to the proof of the exact case, we will only detail the sections where the construction differs. Let $\epsilon^* > 0$ be the error tolerance of the transformer.

   \paragraph{Important assumptions}
   The assumptions for this proof are the same as the previous one.

   \paragraph{Embeddings}
   We use exactly the same embedding scheme as in the previous proof.
   
   \paragraph{Attention mechanism}
   For the attention mechanism, we use a similar construction as in the previous proof with some key differences we highlight in this section.

   For this attention mechanism, we still use $h=2$ heads indexed with $(L)$ and $(R)$. We also use $L = \log_2(T)$ layers. 

   For all $l$ layers, with $1 \leq l \leq L$, let:
\begin{align*}
    \mathbf{W}_Q^{(L)} &= \mathbf{W}_Q^{(R)} = \mathbf{W}_K^{(R)} = 
    \sqrt{C}
    \begin{pmatrix}
        \mat{0}_{n^2 \times 2} & \mat{0}_{n^2 \times 2} & \mat{I}_2
    \end{pmatrix}^\top\\
    \mathbf{W}_K^{(L)} &=
    \sqrt{C}
    \begin{pmatrix}
        \mat{0}_{n^2 \times 2} & \mat{0}_{n^2 \times 2} & \boldsymbol{\rho}_\theta
    \end{pmatrix}^\top
    \\
    \intertext{with }
    \boldsymbol{\boldsymbol{\rho}}_\theta &= \begin{pmatrix}
        \cos \theta & \sin \theta \\
        -\sin \theta & \cos \theta
    \end{pmatrix},\\
    \theta &= -\frac{\pi 2^{l-1}}{T}\\
    \intertext{and}
    C &> 0,
\end{align*}
where $C$ is a saturating constant used to approximate hard attention. This lets us approximate the shifting mechanism to arbitrary precision. Notice that $C$ is a constant and thus has no effect on the number of parameters of our construction.

   \paragraph{Feedforward network}
   We do not give an explicit construction for our MLP. Instead, we invoke Theorem \ref{thm_mlp} leveraging the fact that matrix multiplication is a multivariate polynomial. Let $m_1, m_2$ be the input and output dimensions respectively and $N$ be the size of the hidden layer.
   
   First, we set $d = 2$ as the multivariate polynomial corresponding to matrix multiplication considers second order interactions at most. Then, we set $m_1 = 2n^2 + 2$ and $m_2 = n^2$. The input dimension is the embedding dimension and the output dimension is the size of the resulting matrix multiplication.
   
   Using Theorem \ref{thm_mlp}, this gives us a hidden size of $N = {n^2+2\choose 2} = 2n^4 + 3n^2 + 1\in \mathcal{O}(n^4)$. Note that this value does not depend on $\epsilon$.

   Finally, to this MLP, we append the following linear layer:
   \begin{align*}
    \mat{W}_\text{out} = \begin{pmatrix}
        \mat{I}_{n^2} & \mat{I}_{n^2} & \mat{0}_{{n^2} \times 2}
    \end{pmatrix}^\top,
    \end{align*}
    which lets us copy the result of the composition into both the left and right embeddings.

   % Give recurrent definition of error for each layer
   \paragraph{Error analysis}
   Using an MLP with approximation error $\epsilon_\text{mlp}$ and an attention layer with saturating constant $C$, we want to derive an expression which recursively bounds the error. Let $\mat{A}(\mat{X})$ be the attention filter. After the attention block of the first layer, we have
   \begin{align*}
        \left(\mat{A}(\mat{X}) + \mat{E}_\text{attn}\right)\mat{X}\mat{W}_V &= 
        \mat{A}(\mat{X})\mat{X}\mat{W}_V + \underbrace{\mat{E}_\text{attn}\mat{X}\mat{W}_V}_{=\mat{E}'_\text{attn}}
        \intertext{where $\mat{E}_\text{attn}$ is a matrix representing the error generated by the soft attention. Row-wise at the output of the MLP block, we have}
        (\mat{A}^{\sigma_1} + \text{mat}(\vec{e}'_1))
        (\mat{A}^{\sigma_2} + \text{mat}(\vec{e}'_2)) + \text{mat}(\vec{e}_{i,\text{mlp}}) &=
        \mat{A}^{\sigma_1}\mat{A}^{\sigma_2} +\\
         & \quad \underbrace{\mat{A}^{\sigma_1}\text{mat}(\vec{e}'_2) + \text{mat}(\vec{e}'_1)\mat{A}^{\sigma_2} +\text{mat}(\vec{e}'_1)\text{mat}(\vec{e}'_2) + (\vec{e}_{i,\text{mlp}})}_{\text{error after 1st layer}}
        \intertext{where $\vec{e}'_i$ is the $i$th row vector of $\mat{E}'_\text{attn}$,  $\vec{e}_{i,\text{mlp}}$ is the error generated by the MLP and 
        mat$(\cdot)$ represents matricization. Taking the norm, we get}
        \epsilon^{(1)}_\text{tot} &= \| \mat{A}^{\sigma_1}\text{mat}(\vec{e}'_2) + \text{mat}(\vec{e}'_1)\mat{A}^{\sigma_2} +\text{mat}(\vec{e}'_1)\text{mat}(\vec{e}'_2) + (\vec{e}_{i,\text{mlp}}) \|_2.
        \intertext{Using the triangle inequality and Cauchy-Schwarz, we obtain the following bound}
        \epsilon^{(1)}_\text{tot} &\leq
        \epsilon_\text{attn}\big(\|\mat{A}^{\sigma_1} \|_2 + \| \mat{A}^{\sigma_2}\|_2\big) + \epsilon_\text{attn}^2 + \epsilon_\text{mlp}\\
        &\leq
        \epsilon_\text{attn}\big(\underbrace{2\max_{\sigma \in \Sigma} \|\mat{A}^\sigma \|_2}_{=M}\big) + \epsilon_\text{attn}^2 + \epsilon_\text{mlp}        \intertext{where $\epsilon_\text{attn}$ is the norm of the attention layers error and  $\epsilon_\text{mlp}$ is the norm of the error incurred by the MLP. Thus at the second layer attention mechanism, we get}
         \left(\underbrace{\mat{A}(\mat{X} + \mat{E}^{(1)}_\text{tot})}_{=\mat{A}(\mat{X})} +  \mat{E}_\text{attn}\right)\left(\mat{X}+ \mat{E}^{(1)}_\text{tot}\right)\mat{W}_V &= \mat{A}(\mat{X})\mat{X}\mat{W}_V + 
         \underbrace{\mat{A}(\mat{X})\mat{E}^{(1)}_\text{tot}\mat{W}_V  + \mat{E}_\text{attn}\mat{X}\mat{W}_V + \mat{E}_\text{attn}\mat{E}^{(1)}_\text{tot}\mat{W}_V}_{\text{error after attention layer 2}}.
         \intertext{Here, $\mat{E}^{(1)}_\text{tot}$ is a matrix such that the norm of each row is $\epsilon^{(1)}_\text{tot}$. To simplify the error analysis through the layer 2 MLP, we set}
         \mat{E}^{(1)}_\text{tot} &:= \mat{E}^{(1)}_\text{tot} + \mat{A}(\mat{X})\mat{E}^{(1)}_\text{tot}\mat{W}_V  + \mat{E}_\text{attn}\mat{X}\mat{W}_V + \mat{E}_\text{attn}\mat{E}^{(1)}_\text{tot}\mat{W}_V.
         \intertext{Using a similar approach as for the first layer, we get}
          \epsilon_\text{tot}^{(2)} &\leq \epsilon_\text{tot}^{(1)}M + \left(\epsilon_\text{tot}^{(1)}\right)^2 + \epsilon_\text{mlp}^{(2)}.
   \end{align*}
   
   Thus, we can derive the following recursive expression for $\ell \in [L]$:

    \begin{align}
        \epsilon_\text{tot}^{(\ell)} \leq \epsilon_\text{tot}^{(\ell-1)}M + \left(\epsilon_\text{tot}^{(\ell-1)}\right)^2 + \epsilon_\text{mlp}^{(\ell)},
    \end{align}
    where $M = 2\max_{\sigma \in \Sigma} \| \mat{A}^\sigma\|_2$, $\epsilon_\text{mlp}^{(\ell)}$ is the error incurred by the MLP at layer $\ell$ and $\epsilon_\text{tot}^{(\ell)}$ is the total error at layer $\ell$. For any number of layers, the error remains bounded. This means that we can always choose a large enough $C$ and a small enough $\epsilon^{(\ell)}$ such that $\epsilon_\text{tot}^{(\ell)} \leq \epsilon^*$.
    
    Moreover, given that the size of the MLP $N$ does not depend on the target accuracy epsilon to approximate matrix products, and that the saturating constant $C$ does not affect the parameter count of the attention layer, we get this bound on $\epsilon_\text{tot}^{(\ell)}$ without ever increasing the number of parameters in our construction.

\end{proof}

\section{Proof of Theorem~\ref{thm:approx.wta}}\label{Appendix:C}
First, let us recall the theorem 
\begin{reptheorem}{thm:approx.wta}
Transformers can \textit{approximately} simulate all WTAs $\mathcal{A}$ with $n$ states at length $T$, up to arbitrary precision $\epsilon > 0$, with embedding dimension $\mathcal{O}(n)$, attention width $\mathcal{O}(n)$,  MLP width $\mathcal{O}(n^3)$  and $\mathcal O(1)$ attention heads. Simulation over arbitrary trees can be done with depth $\mathcal{O}(T)$ and simulation over balanced trees~(trees whose depth is of order $\log(T)$) with depth $\mathcal{O}(\log(T))$.    
\end{reptheorem}
Let $\mathcal{A} = \langle \vecs{\alpha} \in \Rbb^n, \ten{T}\in \Rbb^{n\times n \times n}, \{ \vec{v}_\sigma \in \Rbb^n  \}_{\sigma \in \Sigma} \rangle $ be a WTA with $n$ states on $\trees_{\Sigma}$. We will construct a transformer $f$ such that, for any tree $t\in\trees_\Sigma$, the output after $\mathcal{O}(\depth(t))$ layers is such that  $f(\str(t))_i = \mu(\tau(i))$ for all $i\in \mathcal{I}_t$, where $T = |\str(t)|$ and the subtree $\tau(i)$ is defined in Def.~\ref{def:subtree.str.rep}. This will show both parts of the theorem as the depth of any tree $t$ is upper bounded by $|\str(t)|=T$.

The construction has two parts. The first part complements the initial embeddings  with relevant  structural information~(such as the depth of each node). This first part has a constant number of layers. The second part of the transformer computes the sub-tree embeddings  $\mu(\tau_i)$ iteratively, starting from the deepest sub-trees up to the root. After $\depth(t)$ layers of the second part of the transformer, all the subtree embeddings have been computed. 

\begin{proof}
We prove our result by construction. Each section details a specific part of the considered construction.
\paragraph{Initial embedding}
The initial embedding for the $i$th symbol $\sigma_i$ in $\str(t)$ is given by
$$ \vec x^{(0)}_i = ( \vec v_{\sigma_i} \concat \vec p_i \concat m_i \concat 1)$$
where
\begin{itemize}
    \item $\concat$ denotes vector concatenation
    \item  $\vec v_{\sigma_i}$ is taken from the WTA $\mathcal A$ if $\sigma_i\in\Sigma$ and $\vec v_{\sigma_i} = \vec 0$ if $\sigma_i \in \{\opensymbol,\closesymbol\}$
    \item $\vec p_i %= (\cos  \frac{\pi i}{T},\ \sin   \frac{\pi i}{T})^\top
    $ is the positional encoding
    \item $m_i$ is a marker to distinguish leaf symbols, opening and closing parenthesis. It is defined by $m_i = 0$ if $\sigma_i \in \Sigma$, $m_i = 1$ if $\sigma_i = \opensymbol$ and  $m_i = -1$ if $\sigma_i = \closesymbol$.
    \item the last entry equal to $1$ is for convenience (to compactly integrate the bias terms in the attention computations).
\end{itemize}

\paragraph{Enriched embedding}
The purpose of the first layers of the transformer is to add structural information related to the tree structure to the initial embedding. We want to obtain the following representation for the $i$th symbol $\sigma_i$ in $\str(t)$:
$$ \vec x^{(1)}_i = ( \vec v_{\sigma_i} \concat \vec p_i \concat m_i \concat 1\concat d_i \concat d_i^2 )$$
where $d_i$ is the depth of the root of $\tau_i$ in $t$.  

Computing the depth at each position $i\in\mathcal{I}_t$ can easily be done with two layers. The role of the first layer is simply to add a new component corresponding to shifting the markers $m_i$ one position to the right~(as shown in the construction for simulating WFAs, this can be done easily by the attention mechanism). After this first layer, we will have the intermediate embedding
$$ \vec x^{(0.5)}_i = ( \vec v_{\sigma_i} \concat \vec p_i \concat m_i \concat 1\concat m_{i+1} ).$$

Now one can easily check that, by construction, $\sum_{j\leq i} m_{j+1} = \depth(\tau_i)$ for all $i\in \mathcal{I}_t$ since the summation is equal to the difference between the number of open and closed parenthesis before position $i$. Hence, the  attention mechanism of the second layer can compute the sum $d_i = \sum_{j\leq i} m_{j+1}$ by attending to all previous positions with equal weight. The component $d_i^2$ can then be approximated to arbitrary precision by an MLP layer with a constant number of neurons~(see Theorem~\ref{thm_mlp}).

\newcommand{\pos}{\mathrm{pos}}
\paragraph{Tree parsing} The next layers of the transformer are used to compute the final output of the WTA, $\mu(t)$. The two heads of each attention layer are built such that each position $i \in \pos(\opensymbol) := \{i \mid \sigma_i = \opensymbol \}$ attends to the corresponding left and right subtrees, respectively. The MLP layers are used to compute the bilinear map $(\mu(\tau),\mu(\tau^\prime))\mapsto \ten T \times_1 \mu(\tau) \times_2 \mu(\tau') $, which can be approximated to an arbitrary precision% with $\mathcal{O}(n^2)$ neurons
~(from Theorem~\ref{thm_mlp}).

$-$ \textbf{Left head} $-$  First observe that, by construction, for each $i\in\mathcal{I}_t$, the left child of $\tau_i$ is $\tau_{i+1}$. We thus let the left head attention weight matrices $\mat W_Q^{(L)}, \mat W_K^{(L)} \in \R^{(n+6)\times 2}$ be defined by
$$A^{(L)}_{i,j} = \mat x_i^\top \mat W_Q^{(L)} \mat W_K^{(L)}{}^\top \mat x_j  =  \vec p_i^\top \mat R^\top \vec p_j$$
where $\mat R$ denotes the matrix of a 2D rotation of angle $\frac{\pi}{T}$. 
One can easily check that $i+1=\argmax_j A_{i,j}$, thus by multiplying the weight matrices by a large enough constant, the attention mechanism will have each position attend to the next one. We then use the value matrix $\mat W_V^{(L)}$ to select the first part of the corresponding embedding. The output of the left head is thus given by
$$\mat H^{(L)} = (\vec v_{\sigma_2}, \vec v_{\sigma_3},\cdots, \vec v_{\sigma_T},\vec v_{\sigma_T}) $$
and satisfies $\mat H^{(L)}_{:,i} = \vec v_{\mathrm{left}(i)}$ for all $i\in \mathcal{I}_t$, where $\mathrm{left}(i)$ denotes the index of the left child of $\tau(i)$ in $\str(t)$.

$-$ \textbf{Right head} $-$ 
As mentioned above, for each position $i\in\pos(\opensymbol)$, the left child of $\tau_i$ is $\tau_{i+1}$, which is also the next tree in the sequence $\tau_i,\tau_{i+1},\cdots,\tau_T$ whose depth is $\depth(\tau_i)+1$. Similarly, the right child of $\tau_i$ is the second tree in the sequence with depth equal to $\depth(\tau_i)+1$. Thus, we use the attention mechanism to have position $i$ attend to the closest position $j>i+1$ satisfying $d_j = d_i + 1$. In order to do so, we let the right head attention weight matrices $\mat W_Q^{(R)}, \mat W_K^{(R)} \in \R^{(n+6)\times 7}$ be such that
$$A^{(R)}_{i,j} = \mat x_i^\top \mat W_Q^{(R)} \mat W_K^{(R)}{}^\top \mat x_j  = - \beta (1 - (d_j-d_i))^2 + \vec p_i^\top \mat R^\top \vec p_j + 2 \mathbb{I}[j \geq i+2]$$
where $\mat R$ denotes the matrix of a 2D rotation of angle $\frac{2\pi}{T}$. 
The first term ensures that  $j^* = \argmax_j A_{i,j} $ is such that $d_{j^*} = d_i+1$; we choose $\beta$ to be a constant large enough such that the attention weights $A_{i,j}$ are very small for all $j$ such that $d_{j} \neq d_i+1$~(while they are unilaterally $0$ for all positions such that $d_j=d_i+1$). For all positions $j$ such that $d_{j} = d_i+1$, the second term enforces that the closest one to position $i+2$ is chosen. Lastly, the last term enforces that $j^* \geq i+2$. It is obtained by using the Fourier approximation of the Heaviside step function which can be constructed using a constant number of positional embeddings and feedforward layers:
$$\mathbb{I}[j \geq i+2] = \frac{1}{2}\left(1+\sum_{l=0}^k \frac{1}{2l+1}\sin\left((2l+1)(i-j-\frac{1.25 \pi}{T})\right)\right) \simeq 
\begin{cases}
    1 & \text{ for } j = i+2,i+3, ..., T \\
    0 & \text{ for } j = 1, \cdots, i+1
\end{cases}
$$
We thus have that, for all $i\in\mathcal{I}_t$, the position with largest attention weight, $j^*=\argmax_j A_{i,j}$, is equal to the second position after $i$ satisfying $d_j = d_{i+1}$, which is the position of the right subtree of $\tau(i)$. By multiplying the weight matrices by a large enough constant, the attention mechanism will thus have each position in $\mathcal{I}_t$ attend to the corresponding right subtree. We then use the value matrix $\mat W_V^{(R)}$ to select the first part of the corresponding embedding. The output of the right head $\mat H^{(R)}$  thus satisfies $\mat H^{(R)}_{:,i} = \vec v_{\mathrm{right}(i)}$ for all $i\in \mathcal{I}_t$, where $\mathrm{right}(i)$ denotes the index of the right child of $\tau(i)$ in $\str(t)$.

$-$ \textbf{Computing embeddings of depth $1$ subtrees} $-$ We use a third attention layer to simply copy the input tokens. The MLP is thus fed the input vectors 
$$
\tilde{\vec x}_i = (\vec x_i^{(\mathrm{left})} \concat \vec x_i^{(\mathrm{right})}  \concat  \vec v_{\sigma_i} \concat \vec p_i \concat m_i \concat 1\concat d_i \concat d_i^2)
$$ 
in a batch. These inputs are constructed such that, for all positions $i\in\mathcal{I}_t$,   $\vec x_i^{(\mathrm{left})} =  \vec v_{\mathrm{left}(i)}$ and  $\vec x_i^{(\mathrm{right})} =   \vec v_{\mathrm{right}(i)}$. We thus choose the weight of the MLP layer such that it approximates the map
$$ \tilde{\vec x}_i  \mapsto \left( m_i^2 \cdot (\ten T \ttm{1} \vec x_i^{(\mathrm{left})} \ttm{2} \vec x_i^{(\mathrm{right})}) + (1-m_i^2) \vec v_{\sigma_i} \concat \vec p_i \concat m_i \concat 1\concat d_i\concat d_i^2\right)$$ 
to an arbitrary precision. Since this map is a $4$th order polynomial, this can be done with arbitrary precision with $\mathcal{O}(n^4)$ neurons~(from Theorem~\ref{thm_mlp}).

First, observe that we only care about the indices in $\mathcal{I}_t$, which correspond to leaf symbols or opening parenthesis. We now make the following observations: 
\begin{itemize}
    \item For positions corresponding to leaf symbols~(\textit{i.e.} all sub-trees of depth $0$), we have $m_i = 0$, thus this first attention layer copies only the corresponding input token without any modifications, which already contains the sub-tree embedding $\mu(\sigma_i)$ since the first embedding layer.
    \item Similarly, for all positions $i\in\mathcal{I}_t$ such that $\depth (\tau(i)) > 1 $, this first layer only copies the corresponding input. Indeed, for such positions we necessarily have that (i) $\sigma_i=\opensymbol$, thus $m_i=1$ and $\vec v_{\sigma_i}=\vec 0$ and (ii) at least one of the children of $\tau(i)$ is not a leaf and has an initial embedding equal to zero, hence $\ten T \ttm{1} \vec x_i^{(\mathrm{left})} \ttm{2} \vec x_i^{(\mathrm{right})}=\vec 0$.
    \item For all positions $i\in\mathcal{I}_t$ such that $\depth (\tau(i)) = 1 $, we have that both child embeddings $\vec x_i^{(\mathrm{left})}$ and $  \vec x_i^{(\mathrm{right})}$ have been initialized to the corresponding leaf embeddings $\mu(\tau(\mathrm{left} (i) ))$ and $\mu(\tau(\mathrm{right} (i) ))$, respectively. Hence, for such positions, the corresponding output tokens are equal to   
    \begin{align*}
    \tilde{\vec x}_i 
    &= 
    \left(  \ten T \ttm{1} \mu(\tau(\mathrm{left} (i) )) \ttm{2} \mu(\tau(\mathrm{right} (i) )) \concat \vec p_i \concat m_i \concat 1\concat d_i\concat d_i^2\right)\\
    &=
    \left(  \mu(\tau_i) \concat \vec p_i \concat m_i \concat 1\concat d_i\concat d_i^2\right) .
    \end{align*}
\end{itemize}

It follows that after this transformer layer, the output tokens $\vec x^{(2)}_1, \cdots, \vec x^{(2)}_T  $ are such that, for any $i\in\mathcal{I}_t$ such that $\depth(\tau(i)) \leq 1$, we have $x^{(2)}_i = \mu(\tau_i) $.  

$-$ \textbf{Computing embeddings of all subtrees} $-$ One can then check that by constructing the following layers in a similar fashion, the output tokens $\vec x^{(\ell)}_1, \cdots, \vec x^{(\ell)}_T  $ will be such that $x^{(\ell)}_i = \mu(\tau_i) $ for any $i\in\mathcal{I}_t$ satisfying $\depth(\tau(i)) \leq \ell - 1$, which concludes the proof.

\end{proof}

\section{Experiments}
\subsection{Experimental Details}
In this section, we give an in-depth description of the training procedure used for the experiments in Section \ref{experiments}.

\paragraph{General considerations}
For all experiments, we use the PyTorch TransformerEncoder implementation and use a model with 2 attention heads.
We train using the AdamW optimizer with a learning rate of $0.001$ as well as MSE loss with mean reduction. We use a standard machine learning pipeline with an 80, 10, 10 train/validation/test split and retain the model with best validation MSE for evaluation on the test set. We evaluate our models on a sequence to sequence task, where, for a given input sequence, the transformer must produce as output the corresponding sequence of states. All experiments are conducted on synthetic data with number of examples $N = 10\;000$. For each task, we record the mean and minimum MSE over 10 runs. All experiments were run on the internal compute cluster of our institution.

\paragraph{Experiments with Pautomac}
For the experiments using the automata from the Pautomac \cite{pautomac} dataset, we consider only hidden Markov models (HMMs) and probabilistic finite automata (PFA), as deterministic probabilistic finite automata (DPFAs) are very close to DFAs and are more in the scope of the results of \citep{liu2022transformers}. We also consider only automata with a number of states inferior to 20 to keep the size of the required transformers small and use a hidden layer size/embedding size of 64 for all experiments. We use a linear layer followed by softmax at the output for readout, as the task at hand implies computing probability distributions. For all experiments, we use synthetic data sampled uniformly from the automata's support with sequence length $T=64$.

\paragraph{Experiments with counting WFA}
For the experiments using the WFA which counts 0s, we use an embedding size and a hidden layer size of 16 and use a linear layer for readout at output. Note that here we \textbf{do not} append a softmax layer to the linear layer. We consider sequence lengths $T \in \{ 16, 32, 64\}$ and number of layers $L \in \{ 1, 2, \hdots, 10\}$. The synthetic data is generated using the following procedure:

For each $t \in [T]$
\begin{itemize}
    \item Sample a sequence $x$ uniformly from $\Sigma = \{0, 1\}$.
    \item Compute the sequence of states for $x$ and store in a $T \times n$ array.
\end{itemize}
An interesting remark concerning this experiment is that if we round the output to the nearest integer at test time, we obtain an MSE of 0.

\paragraph{Experiments with $k$-counting WFA}
For the experiments with the $k$-counting WFA, we fix the number of layers to 4 and evaluate the model on sequences of length $T=32$. Here, we consider $k \in \{2,4,6,8 \}$ and choose the embedding size $d \in \{2, 4, 8, 16, 32, 64 \}$. Note here that we use the same value for both embedding dimension and hidden size. As it is the case for the binary counting WFA, here we also use a linear layer for readout. The data is generated using the same procedure as described in the above section with the exception that here we sample from $\Sigma = \{ 0, 1, \hdots, k-1\}$.

\subsection{Additional Experiments}
In this section, we present an extended version of the results of the experiments presented in Section \ref{experiments}.
\subsubsection{Can logarithmic solutions be found?}
Here, we present the full table of results containing all MSE values for each considered automata. Here we report the minimum MSE over 10 runs and bold the best MSE for each automaton. 

 \begin{table}[H]
   \caption{MSE for all Pautomac automata}
   \label{sample-table}
   \centering
   \begin{tabular}{llllll}
     \toprule
     Nb/Nb layers & 2 & 4 & 6 & 8 & 10 \\
     \midrule
pautomac 12 & 0.005486 & 0.001660 & 0.000770 & \textbf{0.000356} & 0.000710 \\ 
pautomac 14 & 0.000264 & 0.000130 & \textbf{0.000109} & 0.000158 & 0.006189 \\
pautomac 20 & 0.007433 & 0.002939 & 0.000911 & \textbf{0.000628} & 0.000979 \\
pautomac 30 & 0.029165 & 0.017486 & 0.013498 & \textbf{0.012403} & 0.068889 \\
pautomac 31 & 0.007002 & 0.003804 & 0.001114 & \textbf{0.000890} & 0.000899 \\
pautomac 33 & 0.003654 & \textbf{0.001160} & 0.008794 & 0.017104 & 0.016844 \\
pautomac 38 & 0.001056 & 0.000466 & 0.000316 & 0.000216 & \textbf{0.000213} \\
pautomac 39 & 0.014218 & 0.002677 & \textbf{0.001310} & 0.002736 & 0.002686 \\
pautomac 45 & 0.020730 & \textbf{0.018859} & 0.021852 & 0.024375 & 0.023893 \\
     \bottomrule
   \end{tabular}
 \end{table}
 
\subsubsection{Do solutions scale as theory suggests?}
Here we present the plots for both the mean and minimum MSE values for the synthetic experiments on number of layers and embedding size. We equally include tables containing the average MSE values with their respective standard deviation values. We include these values in a table instead of on the plots directly for readability.
\begin{figure}[H]
    \centering

    \begin{subfigure}{0.45\textwidth}
        \includegraphics[scale=1]{Average_MSE_vs_number_of_layers.pdf}
        \caption{Average MSE over 10 runs}
        \label{fig:layersfig1}
    \end{subfigure}
    \hfill % Adds horizontal space between subfigures
    \begin{subfigure}{0.45\textwidth}
        \includegraphics[scale=1]{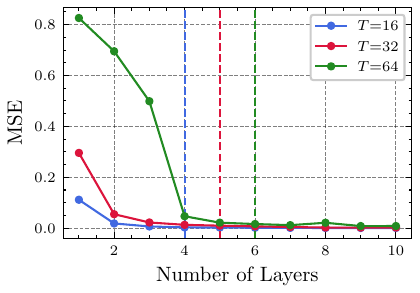}
        \caption{Minimum MSE over 10 runs}
        \label{fig:layersfig2}
    \end{subfigure}

    \caption{MSE vs. number of layers: We notice that for both Figures \ref{fig:layersfig1}} and \ref{fig:layersfig2}, the trend is very similar
    \label{fig:main}
\end{figure}

\begin{figure}[H]
    \centering

    \begin{subfigure}{0.45\textwidth}
        \includegraphics[scale=1]{Average_MSE_vs_embedding_size.pdf}
        \caption{Average MSE over 10 runs}
        \label{fig:embfig1}
    \end{subfigure}
    \hfill % Adds horizontal space between subfigures
    \begin{subfigure}{0.45\textwidth}
        \includegraphics[scale=1]{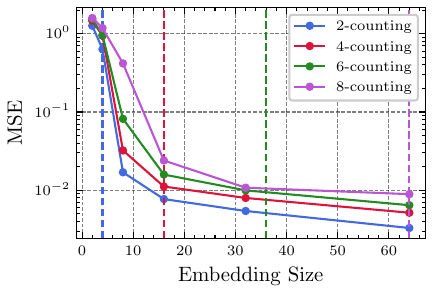}
        \caption{Minimum MSE over 10 runs}
        \label{fig:embfig2}
    \end{subfigure}

    \caption{MSE vs. embedding size: Here the trend between average and minimum values is also very similar}
    \label{fig:main}
\end{figure}

% 16
 \begin{table}[H]
   \caption{MSE with standard deviation across layers}
   \label{layers-table}
   \centering
\begin{tabular}{rrrr}
\toprule
number of layers & $L=16$ & $L=32$ & $L=64$\\
\midrule
1 & 0.202724 $\pm$ 0.162529 & 0.445007 $\pm$ 0.337551 & 1.107175 $\pm$ 0.660082 \\
2 & 0.020226 $\pm$ 0.01240 & 0.073415 $\pm$ 0.0454464 & 0.834934 $\pm$ 0.253568\\
3 & 0.008247 $\pm$ 0.002069 & 0.030223 $\pm$ 0.0099422 & 0.718981 $\pm$ 0.506068\\
4 & 0.003796 $\pm$ 0.002531 & 0.016276 $\pm$ 0.0187777 & 0.198496 $\pm$ 0.369018\\
5 & 0.003137 $\pm$  0.001482 & 0.014042 $\pm$ 0.0087545 & 0.077845 $\pm$ 0.1208077\\
6 & 0.002000 $\pm$ 0.001380 & 0.009957 $\pm$ 0.008190 & 0.055315 $\pm$ 0.060973\\
7 & 0.001449 $\pm$ 0.002166 & 0.006567 $\pm$ 0.004517 & 0.049691 $\pm$ 0.09459\\
8 & 0.001223 $\pm$ 0.000897 & 0.004663 $\pm$ 0.006610 & 0.044012 $\pm$ 0.059172\\
9 & 0.000945 $\pm$ 0.000558 & 0.003357 $\pm$ 0.002276 & 0.046805 $\pm$ 0.146278\\
10 & 0.001156 $\pm$ 0.000687 & 0.003093 $\pm$ 0.004135 & 0.029291 $\pm$ 0.059460\\
\bottomrule
\end{tabular}
\end{table}

 \begin{table}[H]
   \caption{MSE with standard deviation across embedding sizes}
   \label{embedding-table}
   \centering
\begin{tabular}{rrrrr}
\toprule
 embedding size & $k=2$ & $k=4$ & $k=6$ & $k=8$ \\
\midrule
 2 & 1.316259 $\pm$ 0.0361747 & 1.537641 $\pm$ 0.0553744 & 1.619518 $\pm$ 0.0454326 & 1.669770 $\pm$ 0.046430\\
 4 & 0.643738 $\pm$ 0.0401225& 1.016194 $\pm$ 0.704216 & 1.021940 $\pm$ 0.091605& 1.208901 $\pm$0.092536\\
 8 & 0.023678 $\pm$ 0.011056& 0.038078 $\pm$ 0.011791 & 0.186437 $\pm$ 0.769370& 0.459510 $\pm$ 0.164538\\
 16 & 0.010134$\pm$ 0.0027882& 0.01299 $\pm$ 0.002661 & 0.01768 $\pm$ 0.003942 & 0.026244 $\pm$ 0.003718\\
32 & 0.006623 $\pm$ 0.002187& 0.008674 $\pm$ 0.002234 & 0.010712 $\pm$ 0.001917& 0.011528 $\pm$ 0.001126\\
64 & 0.003882 $\pm$ 0.001105& 0.006435 $\pm$  0.003151 & 0.007041 $\pm$ 0.001648 & 0.009701 $\pm$ 0.002908\\
\bottomrule
\end{tabular}
\end{table}

\end{document}